\pdfoutput=1 
\documentclass[11pt]{article}

\usepackage[font=libertinus, citestyle=numeric]{kurbanlab}

\DeclareAffiliation{hbku}{%
  College of Science and Engineering, Hamad Bin Khalifa University, Doha, Qatar}

\DeclareAffiliation{tamu}{%
  Department of Computer and Electrical Engineering,
  Texas A\&M University, College Station, TX, USA}

\DeclareAffiliation{iub}{%
  Luddy School of Informatics, Computing, and Engineering,
  Indiana University Bloomington, Bloomington, IN, USA}

\DeclareAffiliation{uis}{%
  Department of Computer Science,
  University of Illinois Springfield, Springfield, IL, USA}

\DeclareMathOperator{\sg}{sg}
\DeclareMathOperator{\ReLU}{ReLU}

\title{LGQ: Learnable Geometric Quantization for Image Tokenization}
\Subtitle{A learnable codebook that never collapses: annealed soft assignment with a $K$-scaled diversity regularizer}
\RunningTitle{LGQ: Learnable Geometric Quantization for Image Tokenization}

\Author{Idil Bilge Altun}{iub}
\Author{Mert Onur Cakiroglu}{iub}
\Author{Elham Buxton}{uis}
\Author{Mehmet Dalkilic}{iub}
\Author[corresponding=hkurban@hbku.edu.qa, orcid=0000-0003-3142-2866]{Hasan Kurban}{hbku}

\Keywords{image tokenization; vector quantization; codebook collapse; discrete representation learning; generative models}
\CodeURL{https://anonymous.4open.science/r/lgq-anon-E12C/}
\Venue{Under review}

\begin{document}
\maketitle

\begin{abstract}
Recent collapse-free quantizers such as FSQ achieve stable training by replacing the
learnable codebook with an engineered geometry: a fixed scalar grid whose structure is
dictated by the codebook size $K$. We show this trade-off is unnecessary. We introduce
\textbf{Learnable Geometric Quantization (LGQ)}, which retains a learnable codebook of
codes and performs soft-to-hard assignment via temperature annealing, regularized by two
cheap terms: a \emph{diversity} term scaled by codebook size that penalizes concentrated
batch-average usage is the primary driver of collapse resistance, complemented by a
\emph{peakedness} term that sharpens each token's soft-assignment toward one-hot; together
they prevent codebook collapse without EMA, reset heuristics, or codebook
reparameterization. Under a fixed VQ-GAN backbone, we benchmark LGQ against RotVQ, FSQ,
LFQ, SimVQ, and IBQ on ImageNet $256{\times}256$ at $K=16{,}384$, and sweep LGQ over
$K\in\{4096,\dots,65{,}536\}$ without any per-$K$ hyperparameter tuning. LGQ attains the
best reconstruction FID at $K=16{,}384$ while maintaining $100\%$ codebook utilization,
and continues to improve as the codebook grows to $K=65{,}536$, holding $100\%$
utilization at every $K$. Training MaskGIT on the frozen tokenizers, LGQ further attains
the best class-conditional generation among the compared quantizers, leading on
reconstruction and generation alike. Code is available at
\url{https://anonymous.4open.science/r/lgq-anon-E12C/}.
\end{abstract}

\printkeywords

\begin{figure}[t]
    \centering
    \includegraphics[width=0.8\linewidth]{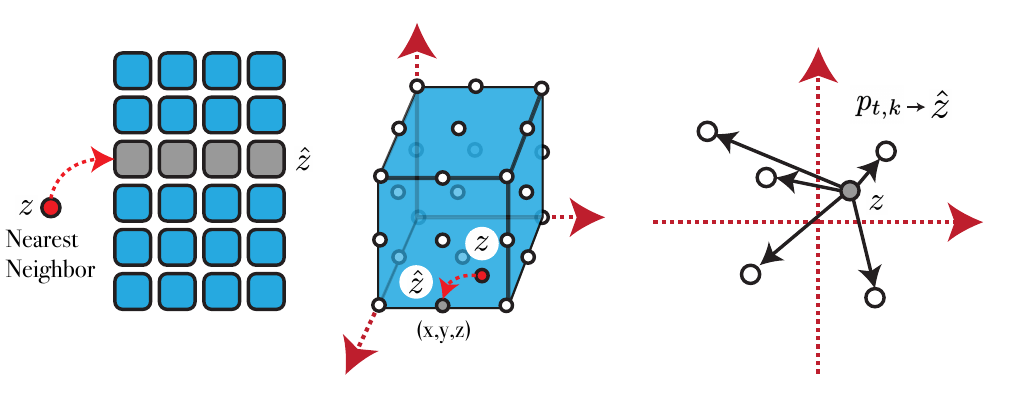}
    \caption{\textbf{VQ-VAE, FSQ and LGQ comparison.}
    Conceptual comparison of three discrete quantizers (VQ, FSQ, LGQ). VQ's flat learned
    codebook collapses; FSQ's fixed grid is collapse-free but unable to adapt to the data
    anisotropy; LGQ keeps a \emph{learnable} codebook with explicit usage regularization.}
    \label{fig:teaser}
\end{figure}

\section{Introduction}
\label{sec:intro}

Discrete representation learning underpins modern generative image pipelines:
vector-quantized autoencoders~\cite{vqvae,vqvae2} compress an image into a short sequence
of tokens that autoregressive~\cite{vqvae,maskgit} and masked-generative~\cite{maskgit,lfq}
models then learn to predict. This makes the tokenizer the ceiling of the entire system.
No downstream model can recover detail its tokenizer has discarded, and a degenerate token
distribution, one where a handful of codes absorb most of the usage, leaves the generator
with a vocabulary that is effectively far smaller than it appears. The central obstacle to
a good tokenizer is therefore keeping a large codebook fully and meaningfully used, the
problem this paper addresses.

Three classes of quantizer dominate the recent literature (\cref{fig:teaser}).
(i)~\textbf{Vector Quantization (VQ)}~\cite{vqvae} learns a flat codebook updated by
exponential moving averages, but is widely reported to undergo \emph{codebook
collapse}~\cite{vqvae2,simvq}: a small fraction of entries dominate.
(ii)~\textbf{Finite Scalar Quantization (FSQ)}~\cite{fsq} replaces the learned codebook
with fixed per-channel scalar grids, eliminating collapse by construction at the cost of
flexibility. (iii)~\textbf{Lookup-Free Quantization (LFQ)}~\cite{lfq} encodes each spatial
location as a binary string, making the codebook implicit. Two more recent designs like
SimVQ~\cite{simvq} that reparameterizes the codebook through a learned linear projection,
and the rotation trick~\cite{rotation_trick} that transforms gradients through quantization
attempt to fix VQ's optimization pathology directly. A common thread runs through the
recent collapse-resistant designs: they either (a) restrict where codes may be placed (FSQ,
LFQ rigidly, SimVQ and VQGAN-LC softly), (b) restrict how many codes receive gradient by
assigning hard (VQ, RotVQ), or (c) change the rule by which codes are scored (IBQ). LGQ
instead maintains a fully learnable codebook and pays for collapse resistance with two
cheap soft-assignment regularizers.

\paragraph{Contributions}
\begin{itemize}
\item We introduce \textbf{LGQ}, a learnable quantizer whose soft-assignment, temperature
schedule, and $K$-scaled diversity regularizer together eliminate codebook collapse with a
single fixed hyperparameter setting across a $16\times$ codebook-size range ($K{=}4{\rm K}$
to $K{=}65{\rm K}$) without EMA, reset, reparameterization, or per-scale structural
redesign, holding $100\%$ utilization at every $K$ where IBQ under the identical protocol
falls to $55\%$ at $65$K (\cref{sec:method,sec:results-sweep}).
\item We provide a controlled \textbf{$K=16{,}384$ comparison} of six quantizers on
ImageNet $256{\times}256$ under an identical VQ-GAN-style backbone and reconstruction
objective, in which LGQ achieves the best rFID, PSNR, SSIM, and LPIPS
(\cref{sec:results-main}).
\item We \textbf{ablate} the regularizer weights ($\lambda_{\text{peak}}$,
$\lambda_{\text{div}}$) and the temperature schedule ($\tau_{\text{start}}$,
$\tau_{\text{end}}$, annealing curve), isolating the design choices that drive LGQ's
collapse resistance (\cref{app:ablations}), and isolate the selection rule with a matched
logit ablation (\cref{sec:vs-ibq}).
\item We report a \textbf{generation-side comparison}: a small MaskGIT transformer trained
on each frozen tokenizer under a matched recipe, on which LGQ achieves the best
class-conditional gFID among the five tokenizers (\cref{tab:maskgit_gen}).
\end{itemize}

\section{Background}
\label{sec:background}

\paragraph{Discrete tokenizers} A discrete tokenizer encodes an image as a short sequence
of tokens from a fixed vocabulary, so that discrete sequence models, \eg autoregressive
transformers or MaskGIT~\cite{maskgit}, can be trained to generate images. It consists of
an encoder, quantizer, and decoder $(E, Q, D)$. The encoder produces a grid of continuous
embeddings $\mathbf{z} = E(x) \in \mathbb{R}^{B \times T \times C}$ ($B$ batch size, $T$
spatial tokens, $C$ embedding dimension); for a single token we write $\mathbf{z} \in
\mathbb{R}^{C}$. The quantizer holds a \emph{codebook} $\mathcal{C} =
\{\mathbf{e}_k\}_{k=1}^{K}$ and maps each token to an index $k = Q(\mathbf{z}) \in
\{1,\dots,K\}$ with embedding $\hat{\mathbf{z}} = \mathbf{e}_k$; the decoder reconstructs
$\hat x = D(\hat{\mathbf{z}})$. The methods differ almost entirely in $Q$; the encoder,
decoder, and reconstruction objective are shared.

\paragraph{Hard assignment and collapse} The canonical quantizer~\cite{vqvae} assigns each
token to its nearest code, $k^\star = \arg\min_k \lVert \mathbf{z} - \mathbf{e}_k\rVert_2^2$.
Since $\arg\min$ is non-differentiable, it is trained with a straight-through estimator
(STE)~\cite{ste} that copies the decoder gradient onto the encoder output, plus a
\emph{commitment} term $\beta\,\lVert \mathbf{z} - \sg[\mathbf{e}_{k^\star}]\rVert_2^2$
($\sg[\cdot]$ is stop-gradient) which keeps encoder outputs near their assigned codes.
Because assignment is hard, only the selected code per token receives a gradient;
rarely-chosen codes drift and stop being selected, causing \emph{codebook collapse}, where
many codes go unused (\emph{dead codes}) and the effective vocabulary is far smaller than
$K$. Standard fixes like EMA updates or dead-code resets treat the symptom rather than the
assignment rule. We measure this through \emph{utilization}, the fraction of codes used at
least once over an evaluation set; keeping it high without sacrificing the adaptivity of a
learnable codebook is the challenge the methods in \cref{sec:related} address, and the axis
along which we compare them with LGQ. Soft-assignment methods, the family LGQ belongs to,
route gradient to all $K$ codes at once (\cref{sec:method}).

\begin{figure}[t]
    \centering
    \includegraphics[width=0.85\linewidth]{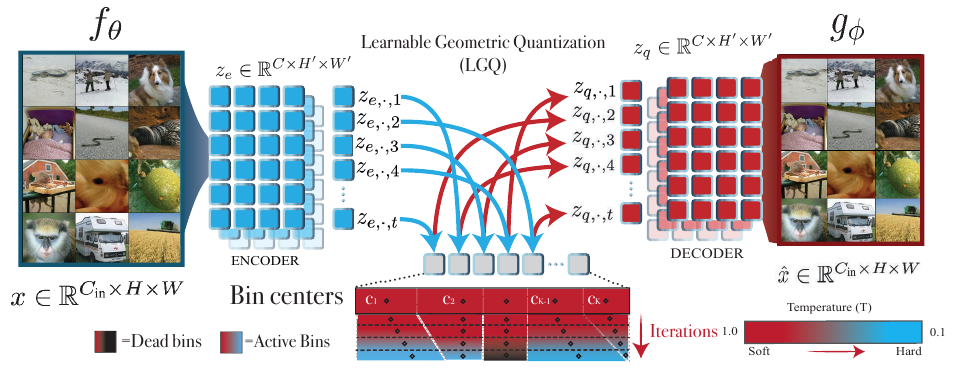}
    \caption{\textbf{LGQ tokenizer.} The encoder $f_\theta$ maps an image to continuous
    latents $z_e$; LGQ assigns each token to learnable codes via the annealed
    soft-assignment ($\tau{:}1.0\!\to\!0.1$), and the decoder $g_\phi$ reconstructs from
    the quantized $z_q$. The same backbone is shared by all six quantizers in our
    comparison.}
    \label{fig:arch}
\end{figure}

\section{Method: Learnable Geometric Quantization}
\label{sec:method}

\subsection{Setup}
We instantiate the tokenizer of \cref{sec:background} (\cref{fig:arch}) with $T=256$
spatial tokens (for $256{\times}256$ images at the downsampling factor $f=16$) and the
embedding dimension $C=64$. The quantizer maps each token $\mathbf{z}_{b,t} \in
\mathbb{R}^C$ to a discrete code $k \in \{1,\dots,K\}$ and embedding
$\hat{\mathbf{z}}_{b,t}$.

\subsection{Soft-assignment with Learnable Codes}
LGQ maintains $K$ learnable codes $\mathbf{e}_k \in \mathbb{R}^C$ and computes a
soft-assignment using squared distance $\ell_2$ and temperature $\tau$:
\begin{equation}
    p_{b,t,k} \;=\;
    \frac{\exp\!\left(-\|\mathbf{z}_{b,t} - \mathbf{e}_k\|_2^2 / \tau\right)}
         {\sum_{k'} \exp\!\left(-\|\mathbf{z}_{b,t} - \mathbf{e}_{k'}\|_2^2 / \tau\right)}.
    \label{eq:softmax}
\end{equation}
The decoded vector uses straight-through gradients~\cite{ste}:
\begin{equation}
    \hat{\mathbf{z}}_{b,t} \;=\; \mathbf{e}_{k^*} \;+\;
        \underbrace{\sum_k p_{b,t,k} \mathbf{e}_k - \sg\!\left[\sum_k p_{b,t,k} \mathbf{e}_k\right]}_{\text{soft path for gradient}},
\end{equation}
where $k^* = \arg\max_k p_{b,t,k}$ and $\sg[\cdot]$ is the stop-gradient. The forward pass
therefore commits to a single discrete code (as in standard VQ), while the backward pass
flows through the full soft-assignment, giving the codebook geometry a well-defined
gradient signal at every step. The score is the squared distance
$\|\mathbf{z}-\mathbf{e}_k\|_2^2$ rather than the unnormalized inner product
$\langle\mathbf{z},\mathbf{e}_k\rangle$ used by IBQ~\cite{ibq}; \cref{sec:vs-ibq} sets the
two rules against each other. Squared $\ell_2$ is also numerically convenient under the
annealing schedule below: at small $\tau$ the exponent saturates the softmax to a near-hard
one-hot without a separate safeguard.

\subsection{Temperature Annealing}
We linearly anneal $\tau$ from $\tau_{\text{start}}=1.0$ to $\tau_{\text{end}}=0.1$ over
training. Early high temperature gives near-uniform assignments and an exploration phase
in which codes can move freely and respond to the encoder distribution; late low
temperature commits each token to one code and lets the codebook geometry crystallize. We
ablate both endpoints and the schedule shape in \cref{subsec:abl_tau}.

\subsection{Regularization}
\label{sec:reg}
Let $\bar{p}_k = \tfrac{1}{BT}\sum_{b,t} p_{b,t,k}$ be the average usage of code $k$ over
the $B{\cdot}T$ tokens of one batch. Under data-parallel training $\bar p$ is formed
independently on each device from that device's local batch, with no cross-rank reduction,
so $B$ denotes the per-GPU batch size throughout. We add two terms to the reconstruction
loss:
\begin{align}
    \mathcal{L}_{\text{peak}}  &= \tfrac{1}{BT}\sum_{b,t} \ReLU\!\left(1 - \sum_k p_{b,t,k}^2\right), \\
    \mathcal{L}_{\text{div}}   &= K \sum_k \bar{p}_k^2.
\end{align}
$\mathcal{L}_{\text{peak}}$ encourages each token's soft-assignment distribution to
concentrate on a single code so that the soft-assignment behaves like a near-hard one-hot
at training time (\emph{peakedness}); $\mathcal{L}_{\text{div}}$ penalizes concentration
of average usage in the batch. The two terms are deliberately push-pull: peakedness asks
individual tokens to commit to a single code, while diversity asks the batch as a whole to
spread its commitments across the codebook. The $K$-scaled diversity term is the
load-bearing anti-collapse pressure while peakedness is a milder refinement that improves
per-token commitment for a further utilization and rFID gain. The regularizer-weight
ablation in \cref{subsec:abl_reg} isolates both effects.

\paragraph{Why $\mathcal{L}_{\text{div}}$ carries a factor of $K$} The $K$ scaling
normalizes the term against the uniform usage vector: if all $\bar p_k = 1/K$ then
$K\sum_k \bar p_k^2 = 1$ for any $K$. This normalization is exact while assignments remain
soft and $\bar p$ retains full support. Once $\mathcal{L}_{\text{peak}}$ and the annealed
temperature drive assignments toward one-hot, $\bar p$ is an average of $B{\cdot}T$
indicators and cannot spread over more than $B{\cdot}T$ codes, so the smallest value the
term can take grows with $K$; \cref{sec:theory} states the bound in both regimes. We use
one $\lambda_{\text{div}}$ across $K = 4{,}096$ to $65{,}536$, and \cref{tab:lgq_sweep}
reports the sweep it produces, where $\lambda_{\text{peak}}=\lambda_{\text{div}}=0.005$.

\paragraph{Total loss}
\begin{equation}
\mathcal{L} = \mathcal{L}_{\text{recon}} +
\lambda_{\text{peak}}\mathcal{L}_{\text{peak}}
   + \lambda_{\text{div}}\mathcal{L}_{\text{div}}.
\end{equation}
$\mathcal{L}_{\text{recon}}$ is the VQ-GAN reconstruction objective~\cite{vqgan}: a
mixture of a pixel-space $\ell_1$ term, an LPIPS perceptual term~\cite{lpips}, and a
PatchGAN hinge loss,
\begin{equation}
\mathcal{L}_{\text{recon}} =
   \lambda_{\ell_1}\lVert x - \hat{x}\rVert_1
 + \lambda_{\text{lpips}}\,\mathcal{L}_{\text{LPIPS}}(x,\hat{x})
 + \lambda_{\text{gan}}\,\mathcal{L}_{\text{GAN}}(\hat{x}),
\end{equation}
where $x$ is the input image, $\hat{x}$ the decoded reconstruction, and
$\mathcal{L}_{\text{GAN}}(\hat{x}) = -\mathbb{E}\!\left[D(\hat{x})\right]$ is the
generator term of the PatchGAN hinge loss with discriminator $D$ trained via
$\mathbb{E}\!\left[\max(0, 1 - D(x))\right] + \mathbb{E}\!\left[\max(0, 1 +
D(\hat{x}))\right]$. We set $\lambda_{\ell_1}{=}1.0$, $\lambda_{\text{lpips}}{=}1.0$, and
$\lambda_{\text{gan}}{=}0.1$; the discriminator is enabled after $10{,}000$ optimizer
steps. All six quantizers in our comparison are trained with this identical reconstruction
objective. The only change between runs is the quantizer.

\section{Theoretical Properties}
\label{sec:theory}

Although LGQ is presented as an empirical method, several properties of its assignment
rule admit clean proofs that motivate the regularizer and schedule choices in
\cref{sec:method}.

\subsection{Preliminaries: Annealed-Softmax Properties}
\label{sec:prelim}
LGQ's soft-assignment is the temperature-scaled softmax
$p^\star_k=\exp(-\|\mathbf{z}-\mathbf{e}_k\|_2^2/\tau)/Z$ (\cref{eq:softmax}), and
inherits three standard (non-LGQ-specific) properties of the Gibbs distribution that we
use below. \emph{(i)}~It is the unique simplex minimizer of the free energy
$\sum_k p_k\|\mathbf{z}-\mathbf{e}_k\|_2^2+\tau\sum_k p_k\log p_k$, so $\tau$ trades
distortion against entropy. \emph{(ii)}~As $\tau\!\to\!0^+$ it converges to the hard
nearest-neighbour indicator $\mathbb{I}[k{=}k^\star]$, so annealing
$\tau{:}\,1.0\!\to\!0.1$ interpolates from an exploratory near-uniform prior to committed
quantization. \emph{(iii)}~For $\|\mathbf{z}\|,\|\mathbf{e}_k\|\le M$ the assignment
Jacobian satisfies $\sum_k\|\nabla_{\mathbf{z}} p_k\|_2\le 4M/\tau$, so the soft path is
Lipschitz and the STE backward signal stays bounded across the schedule.

\subsection{Bridging VQ and FSQ}
By \cref{sec:prelim}, LGQ's assignment recovers VQ's hard nearest-neighbour rule as
$\tau\to 0$ yet stays smooth with a bounded Jacobian for any $\tau>0$, so the codebook
receives a non-degenerate gradient on all $K$ codes at every step where VQ's $\arg\min$
routes gradient through exactly one code per token. The $K$-scaled diversity term pushes
toward FSQ-style full utilization without freezing the codebook to a grid. LGQ inherits
VQ's adaptivity and FSQ's collapse-resistance, with the regularizer pair acting as the
bridge between the two regimes.

\subsection{Bounds on the Regularizers}
\label{sec:bounds}

\begin{proposition}[Peakedness lower bound]
\label{prop:peak}
For any $p\in\Delta^{K-1}$, $\sum_k p_k^2 \le 1$ with equality iff $p$ is one-hot. Hence
$\mathcal{L}_{\mathrm{peak}}=\mathbb{E}_n[\ReLU(1-\sum_k p_{n,k}^2)]\ge 0$, with equality
iff every token's assignment is one-hot.
\end{proposition}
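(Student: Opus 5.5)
The plan has two parts: a direct inequality on the simplex, and then a pass through the $\ReLU$ and the average over tokens.

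\textbf{Step 1: the inequality and its equality case.} For $p\in\Delta^{K-1}$ every coordinate satisfies $0\le p_k\le 1$. Hence $p_k^2\le p_k$ for each $k$, and summing gives
\[
\sum_k p_k^2\le\sum_k p_k=1 .
\]
The total slack is $\sum_k p_k(1-p_k)$, a sum of nonnegative terms. So equality holds iff $p_k(1-p_k)=0$ for every $k$, that is, iff each $p_k\in\{0,1\}$. Because the coordinates sum to $1$, exactly one of them equals $1$, so $p$ is one-hot. The converse is immediate: a one-hot vector has $\sum_k p_k^2=1$.

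\textbf{Step 2: the loss.} For each token $n$, Step 1 gives $1-\sum_k p_{n,k}^2\ge 0$. On nonnegative arguments the $\ReLU$ is the identity, so each summand equals $1-\sum_k p_{n,k}^2\ge 0$. It vanishes exactly when $p_{n,\cdot}$ is one-hot. The expectation $\mathbb{E}_n$ here is the finite batch average $\tfrac{1}{BT}\sum_{b,t}$, an average of nonnegative terms with positive weights. It is therefore $\ge 0$, and it equals $0$ iff every summand is $0$, i.e. iff every token's assignment is one-hot.

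\textbf{Where care is needed.} The only step requiring attention is the converse direction of the equality case. Zero mean implies zero summands only because the average is a finite uniform average with strictly positive weights. If $\mathbb{E}_n$ were read as an expectation over a continuous token distribution, the conclusion would weaken to ``almost every token.''

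A further caveat is that the softmax of \cref{eq:softmax} has strictly positive entries for every $\tau>0$. So the equality case is never attained exactly, only approached as $\tau\to0^+$ by property (ii) of \cref{sec:prelim}. This does not affect the proposition, which is stated for arbitrary $p\in\Delta^{K-1}$, but it is worth a remark.
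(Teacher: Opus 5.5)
Your proof is correct and is essentially the paper's argument. The paper bounds $\sum_k p_k^2 \le (\max_k p_k)\sum_k p_k = \max_k p_k \le 1$; it calls this Cauchy--Schwarz, though it only uses $p_k \le \max_j p_j$. You bound $p_k^2 \le p_k$ directly and read the equality case off the slack $\sum_k p_k(1-p_k)$, which is slightly cleaner. Your explicit handling of the $\ReLU$ and of the finite batch average, which the paper leaves implicit, is also correct.
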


\begin{proposition}[Lower bound on the diversity term]
\label{prop:div}
Let $S=\{k:\bar p_k>0\}$ be the support of $\bar p$ and let $m=|S|$ be its size. Then
\[
\mathcal{L}_{\mathrm{div}} \;=\; K\sum_k \bar p_k^2 \;\ge\; \frac{K}{m},
\]
with equality iff $\bar p$ is uniform on $S$. If every token's assignment is one-hot,
$\bar p$ is an average of $B{\cdot}T$ indicator vectors, so $m \le \min(B{\cdot}T,K)$ and
$\mathcal{L}_{\mathrm{div}} \ge K/\min(B{\cdot}T,K)$.
\end{proposition}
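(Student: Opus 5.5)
The plan is to apply Cauchy--Schwarz to $\bar p$ restricted to its support, then count the support in the one-hot case. First, $\bar p$ is a probability vector. Each $p_{b,t}$ lies in $\Delta^{K-1}$, and $\bar p$ is their average over the $B{\cdot}T$ tokens, so $\bar p_k\ge 0$ and $\sum_k\bar p_k=1$. Since $\bar p_k=0$ off $S$, both sums restrict to $S$: $\sum_{k\in S}\bar p_k=1$ and $\sum_k\bar p_k^2=\sum_{k\in S}\bar p_k^2$.

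Next, apply Cauchy--Schwarz on $S$ to the vectors $(\bar p_k)_{k\in S}$ and $(1)_{k\in S}$:
\[
1=\Bigl(\sum_{k\in S}\bar p_k\cdot 1\Bigr)^2\le m\sum_{k\in S}\bar p_k^2 .
\]
Multiplying by $K/m$ gives $\mathcal{L}_{\mathrm{div}}\ge K/m$. Equality in Cauchy--Schwarz holds iff $(\bar p_k)_{k\in S}$ is proportional to the all-ones vector on $S$. Combined with normalization, this means $\bar p_k=1/m$ on $S$, i.e.\ $\bar p$ is uniform on its support. Conversely, uniformity on $S$ gives $K\cdot m\cdot m^{-2}=K/m$. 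An equivalent route is the variance identity $\sum_{k\in S}(\bar p_k-1/m)^2=\sum_{k\in S}\bar p_k^2-1/m\ge 0$, which makes the equality case immediate.

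For the one-hot clause, write $p_{b,t}=\mathbf{1}_{k(b,t)}$ for each token. Then $\bar p_k=\tfrac{1}{BT}\,\#\{(b,t):k(b,t)=k\}$. So $\bar p_k>0$ iff $k$ is the assigned index of some token, and $S$ is the image of the map $(b,t)\mapsto k(b,t)$. This map has a domain of size $B{\cdot}T$ and a codomain of size $K$, so $m\le\min(B{\cdot}T,K)$. Because $K/m$ is decreasing in $m$, the first part then gives $\mathcal{L}_{\mathrm{div}}\ge K/m\ge K/\min(B{\cdot}T,K)$.

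No step is a real obstacle. The only points to state carefully are, first, that the bound $K/m$ is nontrivial only once $\bar p$ loses full support, since in the soft regime $m=K$ and the bound is just $1$; and second, that "average of indicators" really bounds the support by the number of tokens. Both follow from the definitions above.
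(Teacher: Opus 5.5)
Your proof is correct and follows the paper's argument essentially line for line. You apply Cauchy--Schwarz against the all-ones vector on the support $S$, with equality iff $\bar p$ is constant on $S$, and then observe that an average of $B{\cdot}T$ one-hot vectors is supported on at most $\min(B{\cdot}T,K)$ codes; your extra remarks (the variance identity and the monotonicity of $K/m$ in $m$) only make explicit steps the paper leaves implicit.
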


Both propositions follow from Cauchy--Schwarz; the proofs are given in \cref{app:proofs}.

Which value of $m$ applies depends on where training is. Early on the temperature is high,
each token's assignment is close to uniform, $\bar p$ has full support, and the bound reads
$\mathcal{L}_{\mathrm{div}} \ge 1$ for every $K$. Late in training
$\mathcal{L}_{\mathrm{peak}}$ and the annealed temperature commit each token to a single
code, and the bound becomes $K/(B{\cdot}T)$ once $B{\cdot}T$ is smaller than $K$, so the
floor the diversity term is pushed against grows with the codebook while
$\lambda_{\mathrm{div}}$ is held fixed. Replacing the $K$ prefactor with
$\min(B{\cdot}T,K)$ restores a floor of $1$ at every $K$; our runs use the $K$ prefactor,
and \cref{tab:lgq_sweep} reports what it delivers across the sweep.

The two bounds motivate a regularizer-weight ablation, reported in full in
\cref{app:ablations} and summarized here. These are reduced-budget runs ($50{,}000$ images,
$10$ epochs), so their absolute scores sit well below the full-budget results of
\cref{tab:k16k} and should be read as relative effects. Ablating the terms individually at
$K{=}16{,}384$: dropping peakedness leaves the model usable ($53.40$ to $56.47$ rFID,
utilization $47.7$ to $38.4\%$), whereas dropping diversity destroys it ($53.40$ to
$204.55$ rFID, utilization $47.7$ to $0.12\%$, \ie roughly $20$ of $16{,}384$ codes
alive). Removing both is no worse on rFID than removing diversity alone ($174.19$ against
$204.55$) and equally collapsed ($0.07\%$ against $0.12\%$ utilization). Diversity is
therefore the load-bearing anti-collapse pressure and peakedness a refinement on top of it,
an ordering rather than a pair of equals.

\section{Related Work}
\label{sec:related}

\subsection{Baseline Quantizers and Their Failure Modes}

We organize the related work around two questions: \emph{how does each method prevent
codebook collapse, and at what design cost?} \Cref{app:baselines} tabulates the answers
method by method.

Each baseline answers the collapse question by restricting something: where codes may sit,
how many of them receive gradient, or what the assignment is scored by. VQ~\cite{vqvae}
routes gradient through the $\arg\min$ to a single code per token, so unselected codes
drift into dead codes; EMA updates and resets treat the symptom.
RotVQ~\cite{rotation_trick} repairs the gradient path with a rotation and rescaling but
leaves the hard assignment in place. FSQ~\cite{fsq} and LFQ~\cite{lfq} remove collapse by
construction, FSQ by fixing an axis-aligned scalar grid and LFQ by restricting codes to the
corners of a sign-pattern hypercube, and both give up the ability to place capacity where
the data is dense. SimVQ~\cite{simvq} and VQGAN-LC~\cite{vqganlc} keep a codebook but tie
it to a shared learned transform or to frozen pretrained features, so individual codes
cannot migrate. IBQ~\cite{ibq} is closest to LGQ: it applies STE to a softmax over all $K$
codes, so every code receives a gradient, but scores them by the unnormalized dot product
$\mathbf{z}^\top\mathbf{e}_k$ under a fixed temperature with an entropy penalty on the
assignment distribution, where LGQ uses squared $\ell_2$, an annealed $\tau$, and
regularizers on the soft assignment. Per-method detail is given in \cref{app:baselines}.

\paragraph{Positioning} LGQ is subject to none of the three restrictions above. Codes move
independently, all codes receive gradient, and the score is the squared Euclidean
distance, the same quantity the reconstruction loss minimizes. Utilization is controlled
by the regularizers of \cref{sec:theory} rather than by the codebook parameterization, and
one $(\lambda_{\mathrm{peak}}, \lambda_{\mathrm{div}})$ holds from $K{=}4{,}096$ to
$K{=}65{,}536$ without per-$K$ retuning, which a fixed or tied codebook does not require
and a hard-assignment learned codebook does not achieve.

\subsection{\texorpdfstring{$\ell_2^2$}{l2-squared} and Dot-Product Logits Comparison (LGQ vs.\ IBQ)}
\label{sec:vs-ibq}
The closest baseline to LGQ at the algorithmic level is IBQ~\cite{ibq}, which shares our
two central design choices: it applies STE to a softmax over all $K$ codes, so every code
receives a gradient at every step rather than only the selected one. The methods diverge
in a single but consequential place, which is the quantity the softmax scores. IBQ scores
codes by the unnormalized inner product $\langle\mathbf{z},\mathbf{e}_k\rangle$, whereas
LGQ scores them by the squared $\ell_2$ distance $-\|\mathbf{z}-\mathbf{e}_k\|^2$.

The distinction is clearest from the expansion of the squared distance:
\begin{equation}
\|\mathbf{z}-\mathbf{e}_k\|^2
  = \|\mathbf{z}\|^2
    \;-\; 2\langle\mathbf{z},\mathbf{e}_k\rangle
    \;+\; \|\mathbf{e}_k\|^2 .
\label{eq:mse-expand}
\end{equation}
The MSE reconstruction loss therefore decomposes into three terms: the encoder-output norm
$\|\mathbf{z}\|^2$, the code norm $\|\mathbf{e}_k\|^2$, and the inner product
$\langle\mathbf{z},\mathbf{e}_k\rangle$ that couples them. For a fixed token, the first
term $\|\mathbf{z}\|^2$ is a constant shared by every code, so it has no effect on which
code minimizes the distance; the assignment is decided entirely by the remaining two terms,
$-2\langle\mathbf{z},\mathbf{e}_k\rangle + \|\mathbf{e}_k\|^2$. LGQ's distance-based score
retains both, which is exactly the Bayes-optimal nearest-neighbor quantizer for the MSE
objective. IBQ's dot-product score keeps only the inner-product term and \emph{completely
eliminates $\|\mathbf{e}_k\|^2$}.

The consequence is a systematic bias. Because IBQ's loss contains no term that constrains
or compensates for the code norms, a code can win an assignment by sheer magnitude rather
than by its proximity to $\mathbf{z}$. Enlarging $\|\mathbf{e}_k\|$ inflates
$\langle\mathbf{z},\mathbf{e}_k\rangle$ and makes that code more likely to be selected,
even when a smaller-norm code lies closer in Euclidean terms. The dot-product rule thus
coincides with the optimal rule only in the degenerate case where all $\|\mathbf{e}_k\|$
are equal. Once the norms vary, it systematically prefers larger-norm codes, with
per-sample excess distortion bounded by the maximum norm gap
$\max_{j,k}(\|\mathbf{e}_j\|^2-\|\mathbf{e}_k\|^2)$. This is a structural property of the
selection rule, and the bias grows with the heterogeneity of the learned codebook.

We treat the realized effect empirically rather than as a proof. The IBQ--LGQ rFID gap
($16.21$ vs.\ $12.32$ at $K{=}16{,}384$) motivates the $\ell_2^2$ choice but does not by
itself isolate it, since published IBQ also differs in its two-sided (double) quantization
loss, backbone depth, and entropy penalty.

\paragraph{Isolating the logit} We rerun LGQ changing \emph{only} the quantity the softmax
scores, holding backbone, seed, schedule, optimizer and budget fixed (\cref{tab:logit}).
Reconstruction separates the two arms by $1.36$ rFID over a matched three-epoch window
($12.26$ against $13.62$), against an epoch-to-epoch scatter of $0.92$ rFID between them.
Code recruitment separates them by much more. The dot-product arm has assigned the same
$13{,}084$ codes throughout, with $3{,}300$ codes never assigned once and none recovered,
while the $\ell_2^2$ arm climbs to $100\%$ of the codebook and holds it. This is the norm
term of \cref{eq:mse-expand} made visible. The dot-product score is unbounded above in
$\|\mathbf{e}_k\|$, so a code raises its score by growing rather than by moving closer to
the data. A large code wins assignments it is not nearest to, and a code that stops being
selected receives no assignment gradient with which to grow back. The squared distance
admits no such shortcut, since its score is maximized only by a code sitting exactly on the
data point. The codebook norms confirm this. At the last audited checkpoint the $3{,}300$
dead codes have $\|\mathbf{e}_k\|\in[0.51,0.63]$ against $[5.20,5.85]$ for the $13{,}084$
live ones, two disjoint ranges with $\mathrm{corr}(\|\mathbf{e}_k\|,\log\bar{p}_k)=0.997$,
and every dead norm shrinks between consecutive checkpoints. Under
$-\|\mathbf{z}-\mathbf{e}_k\|^2$ the same correlation is $0.08$ and no code is dead
(\cref{app:vs-ibq}). The ablation isolates the score with LGQ's regularizers held fixed.
Published IBQ pairs the same dot-product score with a low-temperature entropy penalty on
the assignment distribution and reaches $99.99\%$ utilization (\cref{tab:k16k}). That
containment is size-specific: at $K{=}65{,}536$ under the identical protocol, IBQ's
utilization peaks at $98.2\%$ and falls to $55.1\%$ while LGQ's rises to $100\%$
(\cref{app:vs-ibq}). The misallocation the score permits re-emerges as the codebook grows
(\cref{sec:results-sweep}).

\begin{table}[t]
\centering
\caption{\textbf{Matched logit ablation.} Identical backbone, seed, schedule, and compute;
only the quantity the softmax scores differs. rFID is the mean over a matched three-epoch
window, since a single epoch is within run-to-run noise. ``Active'' is the number of codes
assigned at least once on the validation set, out of $K{=}16{,}384$.}
\label{tab:logit}
\small
\begin{tabular}{@{}lccc@{}}
\toprule
\kilth{Softmax score} & \kilth{rFID $\downarrow$} & \kilth{Util.\ \% $\uparrow$} & \kilth{Active} \\
\midrule
$-\|\mathbf{z}-\mathbf{e}_k\|^2$ \;(LGQ)              & \best{12.26} & \best{100.00} & \best{16{,}384} \\
$\langle\mathbf{z},\mathbf{e}_k\rangle$ \;(IBQ-style) & $13.62$      & $79.86$       & $13{,}084$ \\
\bottomrule
\end{tabular}
\end{table}

The two propositions and their proofs, together with a probabilistic (Gaussian mixture
vs.\ von Mises--Fisher) reading of the two selection rules and the gradient/compute
trade-off they imply, are given in \cref{app:vs-ibq}.

\paragraph{Relation to other codebook designs and regularizers} Residual and multi-stage
quantizers such as RQ-VAE~\cite{rqvae} address large-$K$ capacity by composing several
small codebooks; LGQ is orthogonal to this axis as a single-stage learnable codebook, and
the two are combinable. Nor are our regularizers new in isolation: usage and entropy
penalties, temperature-annealed soft-assignment, and geometric spreading terms all appear
in prior VQ variants (\cref{app:baselines}). The contribution is the specific combination,
$\ell_2^2$ soft-assignment with annealing and the push-pull peakedness and $K$-scaled
diversity pair, which removes collapse without EMA, resets, or reparameterization and
holds across the complete $K$ sweep under one setting.

\section{Experimental Setup}
\label{sec:setup}

\subsection{Backbone}
All quantizers share an identical VQ-GAN-style CNN autoencoder backbone~\cite{vqgan} with
base channel width 128 (widening to 256 at the deepest stage) and embedding dimension
$C=64$, downsampling factor $f=16$, producing $T=256$ spatial tokens for $256{\times}256$
images. The encoder/decoder include residual blocks and self-attention at the
$16{\times}16$ feature map. The replacement of the quantizer is the only change between
runs in \cref{tab:k16k}. The complete pipeline is diagrammed in \cref{fig:arch}. Training
uses DDP across $4$ GPUs at per-GPU batch size $8$, so each rank holds $B{\cdot}T{=}2048$
tokens per batch; the full optimizer and data settings are in \cref{app:training}.

\subsection{Evaluation Protocol}
\paragraph{Reconstruction} We report PSNR, SSIM~\cite{ssim}, LPIPS~\cite{lpips},
reconstruction loss and codebook utilization (fraction of entries used at least once on
the validation set). We also report \textbf{rFID}~\cite{fid}: Fr\'echet Inception Distance
between encoder--decoder reconstructions of the validation set and a matched real-image
reference set, computed against a frozen reference. Every rFID we report is computed on a
fixed $10{,}000$-image subset of the ImageNet validation split, identical for every row, so
the values are comparable within this paper but are not comparable to published rFID
computed on all $50{,}000$ images. Recomputing on the full split at each run's best
checkpoint leaves the ordering unchanged (LGQ $6.05$, SimVQ $8.19$, IBQ $8.42$, FSQ
$8.83$, RotVQ $12.76$).

\paragraph{Training runs and seeds} Each tokenizer is trained once per setting: every row
of \cref{tab:k16k,tab:lgq_sweep,tab:geometry,tab:logit,tab:maskgit_gen} comes from a single
training run. The $\pm$ values in \cref{tab:maskgit_gen} are computed over four sampling
seeds at a fixed trained tokenizer and quantify sampling variability alone. Per-epoch
reconstruction curves for every method, from which the epoch-to-epoch scatter of a single
run can be read directly, are given in \cref{app:curves}.

\paragraph{Best-epoch reporting under a matched compute budget} All methods are trained
under an identical wall-time budget, with periodic checkpointing and the same validation
cadence. For each method, we report the best-epoch checkpoint reached within the budget,
the standard early-stopping protocol, so that within-paper comparisons reflect each
method's best quality under matched compute rather than an arbitrary cutoff.

\subsection{Downstream MaskGIT Stage}
\label{subsec:maskgit_setup}
The choice of MaskGIT, rather than the larger LlamaGen-style autoregressive transformers
used by IBQ~\cite{ibq} or the LM-style decoder of MAGVIT-v2~\cite{lfq}, is deliberate: at
our scale it isolates the tokenizer's contribution without conflating it with a
multi-billion-parameter decoder. After tokenizer training, each tokenizer is frozen and
used to convert images to discrete token sequences. On top of the frozen tokenizer we
train a $50.8$M-parameter MaskGIT-style bidirectional transformer~\cite{maskgit} ($12$
layers, hidden size $512$) to predict randomly masked visual tokens, using a cosine masking
schedule, a start-of-sequence token, and a separate mask-token ID. We train on the full
ImageNet training set ($1{,}281{,}167$ images, $1{,}000$ classes) at $256{\times}256$ with
codebook size $K=16{,}384$, an effective batch of $128$ (per-step batch $32$ with gradient
accumulation $4$), AdamW at $\mathrm{lr}=10^{-4}$, and early stopping on the validation
NLL with patience $8$--$10$. All tokenizers are trained under matched settings, and the
leading tokenizers reach ${\sim}100$ epochs, by which point validation NLL has plateaued.

For the generation-side comparison (\cref{sec:results-maskgit}, \cref{tab:maskgit_gen}) we
follow MAGVIT-v2~\cite{lfq} and IBQ~\cite{ibq} in reporting gFID, and Inception Score
where available, on $50$K samples drawn from the trained sampler under the standard cosine
schedule with $11$ decoding steps.

\section{Results}
\label{sec:results}

\subsection{Reconstruction at \texorpdfstring{$K=16{,}384$}{K=16,384}}
\label{sec:results-main}

\Cref{tab:k16k} reports the validation-set reconstruction metrics rFID, PSNR, SSIM, LPIPS,
reconstruction loss, codebook utilization, and perplexity for each method. All six runs
share the identical $\ell_1$\,+\,LPIPS\,+\,PatchGAN reconstruction objective described in
\cref{sec:method}. The only change between runs is the quantizer. LGQ achieves the best
PSNR, SSIM, LPIPS, and rFID across the group; \cref{fig:recon} shows the corresponding
reconstructions. \Cref{tab:k16k} carries no plain-VQ row: RotVQ serves as the
learnable-argmin baseline, and VQ is retained in \cref{app:baselines} as a design
reference.

\begin{table}[t]
\centering
\caption{\textbf{Reconstruction at $K=16{,}384$, ImageNet $256{\times}256$, identical
backbone.} Best-epoch validation metrics from each run is reported. All six methods are
trained under the same matched compute budget; we report the lowest-rFID checkpoint
reached for each. \best{Bold} marks the best in each metric column.}
\label{tab:k16k}
\small
\setlength{\tabcolsep}{5pt}
\begin{tabular}{@{}lccccccc@{}}
\toprule
\kilth{Method} & \kilth{PSNR $\uparrow$} & \kilth{SSIM $\uparrow$} & \kilth{LPIPS $\downarrow$} & \kilth{Rec.\ loss $\downarrow$} & \kilth{rFID $\downarrow$} & \kilth{Util.\ \%} & \kilth{Perplexity} \\
\midrule
RotVQ     & 20.572 & 0.5788 & 0.3032 & 0.2635 & 20.15 & 89.83  & 1{,}921 \\
LFQ       & 21.124 & 0.6013 & 0.2858 & 0.2450 & 18.82 & 100.00 & 16{,}046 \\
FSQ       & 21.475 & 0.6014 & 0.2705 & 0.2322 & 17.38 & 100.00 & 13{,}651 \\
IBQ       & 21.339 & 0.6145 & 0.2661 & 0.2373 & 16.21 & 99.99  & 15{,}101 \\
SimVQ     & 20.930 & 0.5990 & 0.2778 & 0.2512 & 16.05 & 100.00 & 3{,}981 \\
\textbf{LGQ (ours)} & \best{21.791} & \best{0.6289} & \best{0.2438} & \best{0.2225} & \best{12.32} & 100.00 & \best{16{,}011} \\
\bottomrule
\end{tabular}
\end{table}

\begin{figure}[t]
\centering
\includegraphics[width=0.98\linewidth]{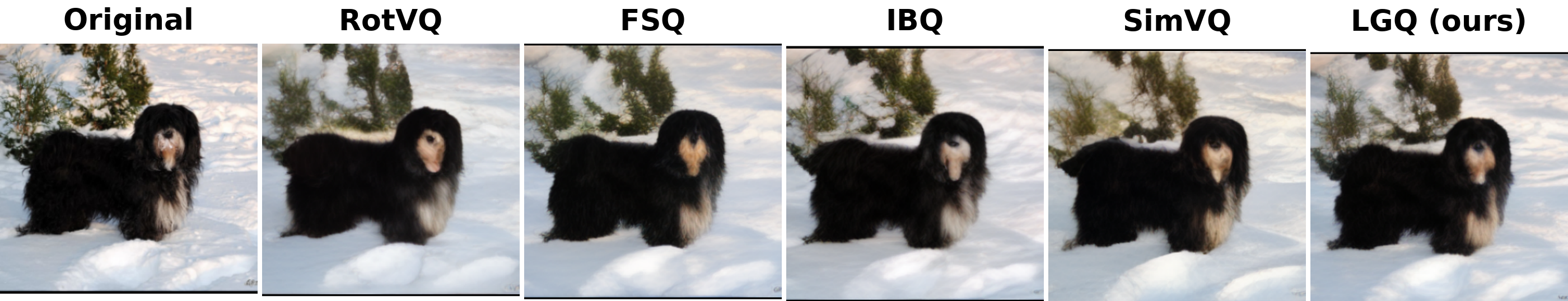}
\caption{\textbf{Reconstructions at $K{=}16{,}384$.} One ImageNet validation image, then
its reconstruction from five of the six frozen tokenizers. LGQ retains the finest texture
and color fidelity, consistent with its best rFID/LPIPS in \cref{tab:k16k}. The full
per-method grid over six images is in \cref{app:qualitative}.}
\label{fig:recon}
\end{figure}

\subsection{LGQ Codebook-Size Sweep}
\label{sec:results-sweep}

\Cref{tab:lgq_sweep} reports LGQ across five codebook sizes. PSNR, SSIM, and LPIPS improve
with $K$, and rFID drops monotonically from $16.91$ at $K=4{,}096$ to $10.72$ at
$K=65{,}536$. The key result is that utilization reaches and holds $100\%$ at every $K$,
so the learnable codebook never collapses as it grows $16\times$, under a single fixed set
of regularizer weights with no per-$K$ retuning. What changes with $K$ is how long
saturation takes: the first epoch at which utilization reaches $100\%$ is epoch~$7$ at
$K{=}4{,}096$, then $10$, $13$, $22$ and $29$ as $K$ grows to $65{,}536$, and once reached
it is held at every later epoch we measure. Each row is read at or after its own
saturation epoch, so a larger codebook is given the extra epochs it needs rather than a
checkpoint chosen to flatter it. All four reconstruction metrics continue to improve
through $K=65{,}536$ with no sign of levelling off at this backbone capacity. To test
whether this behaviour is specific to LGQ, \cref{app:vs-ibq} trains IBQ at $K{=}4{,}096$,
$16{,}384$ and $65{,}536$ under the identical protocol: LGQ is ahead on every metric at all
three sizes at a matched epoch, and where IBQ's utilization falls as $K$ grows, LGQ's rises
to $100\%$ (\cref{sec:vs-ibq}).

\begin{table}[t]
\centering
\caption{\textbf{LGQ codebook-size sweep on ImageNet $256{\times}256$.} For each $K$ we
report the converged checkpoint at which codebook utilization saturates to $100\%$.
\best{Bold} marks the best in each metric column. A single set of regularizer weights
($\lambda_{\text{peak}}{=}\lambda_{\text{div}}{=}0.005$) and a single temperature schedule
($1.0\!\to\!0.1$ linear) is used across the entire sweep without per-$K$ retuning.}
\label{tab:lgq_sweep}
\small
\setlength{\tabcolsep}{5pt}
\begin{tabular}{@{}rccccccc@{}}
\toprule
\kilth{$K$} & \kilth{PSNR $\uparrow$} & \kilth{SSIM $\uparrow$} & \kilth{LPIPS $\downarrow$} & \kilth{Rec.\ loss $\downarrow$} & \kilth{rFID $\downarrow$} & \kilth{Util.\ \%} & \kilth{Perplexity} \\
\midrule
$4{,}096$  & 21.353 & 0.6080 & 0.2751 & 0.2358 & 16.91 & 100.00 & 3{,}997 \\
$8{,}192$  & 21.494 & 0.6167 & 0.2610 & 0.2312 & 14.59 & 100.00 & 7{,}893 \\
$16{,}384$ & 21.791 & 0.6289 & 0.2438 & 0.2225 & 12.32 & 100.00 & 16{,}011 \\
$32{,}768$ & 21.932 & 0.6386 & 0.2371 & 0.2180 & 11.90 & 100.00 & 31{,}826 \\
$65{,}536$ & \best{22.183} & \best{0.6491} & \best{0.2307} & \best{0.2123} & \best{10.72} & 100.00 & \best{63{,}077} \\
\bottomrule
\end{tabular}
\end{table}

\subsection{Codebook Geometry Analysis}
\label{sec:results-geometry}

To move beyond aggregate rFID and characterize \emph{how} each codebook distributes
information, we encode $5{,}000$ ImageNet validation images through each tokenizer and
compute four statistics over the resulting $(N,T)$ index tensors ($T{=}256$ spatial tokens
per image, $K{=}16{,}384$): normalized \textbf{marginal entropy}
$H_{\text{norm}}{=}H(p)/\log K$ (usage uniformity across tokens: $1$ is perfectly uniform,
low values flag dominant ``magnet codes''); \textbf{top-1\% mass} (the share of
probability captured by the most-used $1\%$ of codes, low is balanced);
\textbf{co-occurrence density} (the fraction of code-pairs that ever appear within the
same image; high indicates compositional, freely-combining codes); and \textbf{pairwise
mutual information} between two random spatial positions (high indicates structured,
non-redundant spatial coding).

\begin{table}[t]
\centering
\caption{\textbf{Codebook geometry at $K{=}16{,}384$.} LGQ leads on all four metrics; IBQ
is second on each. RotVQ is severely concentrated, with $43\%$ of usage in its top $1\%$
of codes. Utilization here is on the $5{,}000$-image analysis subset, so RotVQ reads lower
than its full-validation value in \cref{tab:k16k}. MI is a plug-in estimate over $32$
random spatial position pairs from the same $5{,}000$ images. That estimator saturates near
$\log N{=}8.52$ nats, close to the values reported here, so the MI column should be read as
a ranking and not as a measurement of absolute mutual information.}
\label{tab:geometry}
\small
\begin{tabular}{@{}lccccc@{}}
\toprule
\kilth{Tokenizer} & \kilth{Util.\,\%} & \kilth{$H_{\text{norm}}$ $\uparrow$} & \kilth{Top-1\% $\downarrow$} & \kilth{Co-occ.\,\% $\uparrow$} & \kilth{MI $\uparrow$} \\
\midrule
\textbf{LGQ (ours)} & \best{100.0} & \best{0.997} & \best{1.9} & \best{13.1} & \best{8.08} \\
IBQ       & 100.0 & 0.991 &  2.6 & 12.9 & 8.01 \\
FSQ       & 100.0 & 0.977 &  3.7 & 12.2 & 7.94 \\
SimVQ     & 100.0 & 0.854 & 28.9 &  6.2 & 6.34 \\
RotVQ     &  79.9 & 0.771 & 43.4 &  4.0 & 5.20 \\
\bottomrule
\end{tabular}
\end{table}

\Cref{tab:geometry} shows that LGQ's codebook is near-perfectly uniform
($H_{\text{norm}}{=}0.997$), so no small subset of codes absorbs the token distribution.
Its top-$1\%$ mass of only $1.9\%$ confirms that no ``dead star'' codes dominate; in
contrast, RotVQ allocates $43\%$ of total usage to its top $164$ codes: a ${\sim}23\times$
concentration relative to LGQ.

The co-occurrence density ($13.1\%$) and pairwise MI ($8.08$ nats) are also highest for
LGQ, indicating codes that combine richly across spatial positions; IBQ is the closest
competitor (co-occurrence $12.9\%$, MI $8.01$), with FSQ just behind. While FSQ and SimVQ
also achieve 100\% utilization, their lower mutual information and co-occurrence scores
indicate that global utilization alone does not guarantee a rich latent space. A UMAP
overlay of the learned codes on the continuous encoder distribution, given in
\cref{app:geometry}, shows the same thing: the baselines group into isolated subregions
where LGQ spreads smoothly across the distribution.

Codebook uniformity is positively but imperfectly associated with reconstruction quality
(\cref{tab:k16k}): the ordering is anchored by LGQ at both extremes but is not monotone,
SimVQ reaches a strong rFID ($16.05$) at only moderate uniformity
($H_{\text{norm}}{=}0.854$), so uniformity is best read as one contributing factor, which
LGQ's regularizers pursue as a mechanism to keep the full codebook available to the
decoder rather than as an end in itself.

\subsection{Downstream Generation with MaskGIT}
\label{sec:results-maskgit}
Using the MaskGIT transformers trained on each frozen tokenizer, we report the
class-conditional $50$K-sample generation comparison at $K=16{,}384$ in
\cref{tab:maskgit_gen}; per-tokenizer sample grids are in \cref{app:qualitative}. LGQ
attains the best gFID ($57.69$), extending its reconstruction lead into generation, with
IBQ second ($59.47$); IBQ attains the best IS ($15.03$). Ordered by gFID, LGQ produces the
sharpest and most class-coherent samples and RotVQ ($92.99$ gFID) the blurriest.

\paragraph{Utilization is not sufficient} FSQ, SimVQ, and LGQ all reach $100.00\%$
utilization at $K{=}16{,}384$ (\cref{tab:k16k}) yet span $12.5$ gFID ($70.16$, $69.38$,
$57.69$), so a fully used codebook does not by itself determine generation quality: high
utilization shows that codes are assigned, not that quantization error is low or that the
resulting token distribution is easy to model. Our claim is that LGQ attains high
utilization, the best reconstruction, and the best generation \emph{jointly}. The finer
usage statistics of \cref{sec:results-geometry} do separate the fully-utilizing methods
and order them consistently with quality, but they too are measured alongside quality
rather than intervened upon; a causal test would require intervening on the token
distribution with reconstruction held fixed, which we do not attempt.

\begin{table}[t]
\centering
\caption{\textbf{Class-conditional MaskGIT generation at $K{=}16{,}384$.} gFID and IS are
measured on $50$K class-conditionally sampled images ($T{=}11$ steps, identical sampling
settings across tokenizers) against ImageNet-256 reference statistics, following
MAGVIT-v2~\cite{lfq} and IBQ~\cite{ibq}. Each tokenizer is a single training run; for LGQ,
IBQ, and FSQ we report mean\,$\pm$\,std over four \emph{sampling} seeds at that fixed
tokenizer. Each MaskGIT is trained on its method's \emph{best-rFID} tokenizer and evaluated
at its best-val-NLL checkpoint within a matched wall-clock budget.}
\label{tab:maskgit_gen}
\small
\begin{tabular}{@{}lccc@{}}
\toprule
\kilth{Tokenizer} & \kilth{rFID $\downarrow$} & \kilth{gFID $\downarrow$} & \kilth{IS $\uparrow$} \\
\midrule
\textbf{LGQ (ours)} & \best{12.32} & \best{57.69\,{\footnotesize$\pm$0.26}} & 14.80 \\
IBQ        & 16.21 & 59.47\,{\footnotesize$\pm$0.09} & \best{15.03} \\
SimVQ      & 16.05 & 69.38 & 13.26 \\
FSQ        & 17.38 & 70.16\,{\footnotesize$\pm$0.24} & 13.60 \\
RotVQ      & 20.15 & 92.99 & 10.63 \\
\bottomrule
\end{tabular}
\end{table}

\subsection{Computational Cost}
\label{subsec:cost}

All-code soft assignment is widely assumed to be expensive; the full timing and memory
table is in \cref{app:cost}. Timed in isolation at the training configuration, the
quantizer accounts for $\sim$$10\%$ of a full optimizer step at $K{=}16{,}384$, where LGQ
is $3.8\times$ faster and $1.5\times$ smaller than IBQ; IBQ exhausts memory at
$K{=}262{,}144$, where LGQ still trains. FSQ is cheaper than both and $K$-independent, so
our claim is best reconstruction per unit cost, not lowest cost. End-to-end the picture is
flatter still: one epoch costs $15$--$17$ GPU-hours for \emph{every} quantizer at
$K{=}16{,}384$, so the quantizer is not the end-to-end bottleneck at any $K$ we train at.

\section{Discussion}
\label{sec:discussion}
Under a fixed VQ-GAN-style objective with all quantizers sharing an identical backbone and
budget, LGQ achieves the best rFID, a $-3.73$ gap to the next baseline, from a single
untuned configuration; in the ablations of \cref{app:ablations} rFID stays within a narrow
band along the symmetric diagonal from $\lambda_{\text{peak}}{=}\lambda_{\text{div}}{=}0.001$
to $0.05$ and across annealing-schedule shapes. The advantage holds as the codebook grows
$16\times$ to $K{=}65{,}536$ with utilization saturated throughout, and extends to
downstream generation, where LGQ attains the best gFID. Notably, the baseline ordering
does not transfer cleanly from reconstruction to generation, echoing prior reports that
stronger reconstruction need not yield better generation.

\paragraph{Limitations and future work}
This study has three principal limitations. First, our evaluation is confined to a single
modality: all experiments are conducted on static images at ImageNet $256{\times}256$, and
we do not assess whether LGQ's advantages extend to other modalities such as audio or
video. Second, the codebook-size sweep covers LGQ at five values of $K$ and one baseline,
IBQ, at $K{=}4{,}096$, $16{,}384$ and $65{,}536$; the remaining quantizers are evaluated
only at the matched size $K{=}16{,}384$. We show that LGQ's $K$-scaled diversity
regularizer stays calibrated across the sweep without re-tuning and that IBQ loses
utilization over the same range, but we do not characterize how the remaining quantizers
scale; extending the sweep to every quantizer is left to future work. Third, each tokenizer
is trained once per setting, so we report no variance across training seeds; the intervals
in \cref{tab:maskgit_gen} are over sampling seeds at a fixed tokenizer, and the per-epoch
curves in \cref{app:curves} are the only evidence we offer on run-level stability. We
further note that, because our shared backbone and matched training budget differ from
those of the original baseline papers, we regard the \emph{relative} ranking obtained
under this identical-backbone protocol as the claim of this work, rather than the absolute
scores.

\section{Conclusion}
\label{sec:conclusion}
We introduced Learnable Geometric Quantization (LGQ), a vector quantizer that eliminates
codebook collapse without EMA or reset heuristics. By combining soft-to-hard temperature
annealing with lightweight peakedness and $K$-scaled diversity regularizers, LGQ scales to
large codebooks while maintaining $100\%$ utilization under one hyperparameter setting. On
ImageNet $256{\times}256$ within a fixed VQ-GAN backbone, it achieves the best rFID, PSNR,
SSIM, and LPIPS at $K{=}16{,}384$, and keeps improving as the codebook grows to
$K{=}65{,}536$. Trained on the frozen tokenizers, MaskGIT transformers achieve the best
class-conditional gFID with LGQ, leading on reconstruction and generation alike.



\begin{availability}
Code and training configurations are available at
\url{https://anonymous.4open.science/r/lgq-anon-E12C/}.
ImageNet ILSVRC-2012~\cite{imagenet} is public and cited in \cref{sec:setup}.
\end{availability}

\begin{conflicts}
The authors declare no competing interests.
\end{conflicts}

\bibliography{references}

\appendix

\section{Ablations}
\label{app:ablations}
We isolate the two load-bearing design choices in LGQ: the regularizer weights
($\lambda_{\text{peak}}$, $\lambda_{\text{div}}$) and the temperature schedule
($\tau_{\text{start}}$, $\tau_{\text{end}}$, schedule shape). All ablation runs use the
same backbone and reconstruction objective as the main comparison (\cref{sec:setup}),
$K=16{,}384$, trained on a reduced $50{,}000$-image / $10$-epoch budget so the full grid is
tractable. Absolute scores are therefore lower than the full-budget results of
\cref{sec:results} and should be read as \emph{relative} effects; row~1 of each table is
the default setting at this same reduced budget.

\subsection{Regularizer-Weight Ablation}
\label{subsec:abl_reg}
\Cref{tab:abl_reg} sweeps $\lambda_{\text{peak}}$ and $\lambda_{\text{div}}$ jointly around
the default $0.005$ setting, and also disables each regularizer in isolation. The two
``$\lambda{=}0$'' rows test the necessity of each term: the push-pull formulation of
\cref{sec:reg} predicts that peakedness alone collapses (peaked tokens, but only a few
codes used across the batch) and that diversity alone leaves soft assignments diffuse
(uniform usage, but no commitment to a single code at the token level). The 2D grid around
$0.005$ tests whether the chosen weight sits in a wide flat region or on a knife edge.
Both predictions are borne out. Removing the diversity term ($\lambda_{\text{div}}{=}0$)
collapses the codebook to $0.12\%$ utilization (rFID $204.55$), and disabling both terms is
no better: it is equally collapsed ($0.07\%$ utilization) at a marginally lower rFID
($174.19$); even merely weakening diversity to $0.001$ while keeping full peakedness
already drops usage to $5.32\%$. Dropping peakedness alone ($\lambda_{\text{peak}}{=}0$) is
far less damaging: utilization stays at $38.35\%$ and rFID degrades only mildly
($53.40\!\to\!56.47$) which is consistent with the diversity term, not peakedness, being
the load-bearing anti-collapse pressure. Across the symmetric grid from $0.001$ to $0.05$
rFID stays within a narrow $50.5$--$53.4$ band, indicating the default $0.005$ sits in a
wide flat region rather than on an edge.

\begin{table}[t]
\centering
\caption{\textbf{Regularizer-weight ablation.} LGQ at $K=16{,}384$, varying
$\lambda_{\text{peak}}$ and $\lambda_{\text{div}}$ jointly. Row~1 is the default setting;
the ``$\lambda_{\text{peak}}{=}0$'' and ``$\lambda_{\text{div}}{=}0$'' rows test the
necessity of each regularizer in isolation. Reduced-budget runs (see
\cref{app:ablations}); scores are relative, not comparable to the full-budget main
results.}
\label{tab:abl_reg}
\small
\setlength{\tabcolsep}{4pt}
\begin{tabular}{@{}cccccccl@{}}
\toprule
\kilth{$\lambda_{\text{peak}}$} & \kilth{$\lambda_{\text{div}}$} & \kilth{PSNR $\uparrow$} & \kilth{SSIM $\uparrow$} & \kilth{LPIPS $\downarrow$} & \kilth{rFID $\downarrow$} & \kilth{Util.\ \%} & \kilth{Notes} \\
\midrule
$0.005$  & $0.005$  & 20.835 & 0.5833 & 0.3880 & 53.40  & 47.66 & default \\
\midrule
$0$      & $0.005$  & 20.679 & 0.5626 & 0.3917 & 56.47  & 38.35 & no peak: usable, weaker commitment \\
$0.005$  & $0$      & 17.583 & 0.4369 & 0.5601 & 204.55 & 0.12  & no div: codebook collapse \\
$0$      & $0$      & 17.695 & 0.4390 & 0.5453 & 174.19 & 0.07  & both off: collapse \\
\midrule
$0.001$  & $0.001$  & 20.796 & 0.5762 & 0.3718 & 50.48  & 21.61 & weak regularization \\
$0.001$  & $0.005$  & 20.586 & 0.5783 & 0.3788 & 49.74  & 40.30 & asymmetric: weak peak, std div \\
$0.005$  & $0.001$  & 20.304 & 0.5442 & 0.4224 & 75.59  & 5.32  & asymmetric: std peak, weak div \\
$0.01$   & $0.01$   & 20.947 & 0.5772 & 0.3636 & 51.29  & 48.85 & 2$\times$ default \\
$0.05$   & $0.05$   & 20.787 & 0.5782 & 0.3796 & 52.08  & 68.24 & 10$\times$ default (over-reg.\ test) \\
\bottomrule
\end{tabular}
\end{table}

\subsection{Temperature-Schedule Ablation}
\label{subsec:abl_tau}
\Cref{tab:abl_tau} ablates the temperature schedule. The default anneals $\tau$ linearly
from $1.0$ to $0.1$ over training. We vary the starting temperature ($0.5$, $1.0$, $2.0$),
the ending temperature ($0.05$, $0.1$, $0.3$), the constant-$\tau$ baseline (no
annealing), and the schedule shape (linear vs.\ cosine). Two failure modes seemed likely a
priori: starting or ending too hot should leave the soft assignment diffuse late in
training, and starting or ending too cold should suppress early exploration, lowering
utilization and raising rFID. The cold-start mode appears sharply. A constant cold schedule
($\tau{=}0.1$) drops utilization to $4.33\%$ at rFID $113.80$, and a cooler start ($0.5 \to
0.1$) already cuts usage to $8.37\%$. The hot-start mode, by contrast, does not appear at
this budget: the hottest start ($2.0 \to 0.1$) is the strongest row in the table (rFID
$36.00$, utilization $77.18\%$), ahead of the default $1.0 \to 0.1$ ($53.40$, $47.66\%$).
For $\tau_{\text{start}} \ge 1.0$ the ending temperature, the cosine shape, and the
constant-$1.0$ baseline all fall within noise of the default ($52.8$ to $54.1$ rFID). The
schedule is therefore forgiving on the warm side and brittle only when started cold. One
consequence matters for how the main results should be read. The $1.0 \to 0.1$ schedule
used for every main-table LGQ run is not the best setting in this ablation; a hotter start
($2.0 \to 0.1$) is better here. We hold a single $1.0 \to 0.1$ schedule across the entire
study rather than tuning per setting, so the headline LGQ numbers, which are already the
best among all tokenizers, are reported at a conservative, untuned operating point. This
ablation is run at reduced budget; whether the hot-start advantage persists at full budget
is left to future work.

\begin{table}[t]
\centering
\caption{\textbf{Temperature-schedule ablation.} LGQ at $K=16{,}384$, varying
$\tau_{\text{start}}$, $\tau_{\text{end}}$, and the annealing shape. ``Schedule'' column:
\textsc{lin} = linear, \textsc{cos} = cosine, \textsc{const} = no annealing. Row~1 is the
default setting. Reduced-budget runs (see \cref{app:ablations}).}
\label{tab:abl_tau}
\small
\setlength{\tabcolsep}{5pt}
\begin{tabular}{@{}cccccccc@{}}
\toprule
\kilth{$\tau_{\text{start}}$} & \kilth{$\tau_{\text{end}}$} & \kilth{Schedule} & \kilth{PSNR $\uparrow$} & \kilth{SSIM $\uparrow$} & \kilth{LPIPS $\downarrow$} & \kilth{rFID $\downarrow$} & \kilth{Util.\ \%} \\
\midrule
$1.0$  & $0.1$  & \textsc{lin}   & 20.835 & 0.5833 & 0.3880 & 53.40  & 47.66 \\
\midrule
$0.5$  & $0.1$  & \textsc{lin}   & 20.446 & 0.5548 & 0.4011 & 62.95  & 8.37 \\
$2.0$  & $0.1$  & \textsc{lin}   & 20.766 & 0.5807 & 0.3412 & 36.00  & 77.18 \\
$1.0$  & $0.05$ & \textsc{lin}   & 20.589 & 0.5622 & 0.3887 & 53.67  & 47.27 \\
$1.0$  & $0.3$  & \textsc{lin}   & 20.558 & 0.5804 & 0.3860 & 52.91  & 48.55 \\
\midrule
$1.0$  & $0.1$  & \textsc{cos}   & 20.680 & 0.5850 & 0.3863 & 54.06  & 50.56 \\
$1.0$  & $1.0$  & \textsc{const} & 20.638 & 0.5718 & 0.3820 & 52.78  & 52.86 \\
$0.1$  & $0.1$  & \textsc{const} & 19.378 & 0.4993 & 0.4852 & 113.80 & 4.33 \\
\bottomrule
\end{tabular}
\end{table}

\FloatBarrier

\section{Proofs}
\label{app:proofs}
This section proves the two propositions of \cref{sec:bounds}, restated here so the
appendix is self-contained.

\begin{proposition*}[Restatement of \cref{prop:peak}]
For any $p\in\Delta^{K-1}$, $\sum_k p_k^2 \le 1$ with equality iff $p$ is one-hot. Hence
$\mathcal{L}_{\mathrm{peak}}=\mathbb{E}_n[\ReLU(1-\sum_k p_{n,k}^2)]\ge 0$, with equality
iff every token's assignment is one-hot.
\end{proposition*}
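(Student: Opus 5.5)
The plan is to prove the scalar inequality first and then lift it to the loss. For $p\in\Delta^{K-1}$ every coordinate satisfies $0\le p_k\le 1$, so $p_k^2\le p_k$ termwise. Summing over $k$ gives
\[
\sum_k p_k^2 \;\le\; \sum_k p_k \;=\; 1 .
\]
This is the elementary route. The paper mentions Cauchy--Schwarz, but for the upper bound the termwise comparison is cleaner. I would remark that Cauchy--Schwarz is what gives the companion lower bound $\sum_k p_k^2\ge 1/K$; that bound is not needed here.

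For the equality case I would write
\[
1-\sum_k p_k^2 \;=\; \sum_k p_k(1-p_k),
\]
a sum of nonnegative terms. Equality therefore holds iff $p_k(1-p_k)=0$ for every $k$, that is, iff every coordinate lies in $\{0,1\}$. Combined with $\sum_k p_k=1$, this says exactly one coordinate equals $1$, so $p$ is one-hot. The converse is immediate, since a one-hot vector has $\sum_k p_k^2=1$.

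For the loss statement, each summand $\ReLU(1-\sum_k p_{n,k}^2)$ is nonnegative by definition of $\ReLU$. By the scalar bound the argument $1-\sum_k p_{n,k}^2$ is itself nonnegative, so the $\ReLU$ acts as the identity here; this is worth noting because it means the clipping never activates. The empirical mean $\mathbb{E}_n$ (the average over the $B\cdot T$ tokens) of nonnegative terms is nonnegative. It equals zero iff every term is zero, which by the equality case holds iff each token's $p_n$ is one-hot.

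There is no real obstacle. The only care point is the "iff" in the equality case, which the identity $\sum_k p_k(1-p_k)$ handles directly. A second point is to say explicitly that $\mathbb{E}_n$ is a finite average with positive weights, since that is what lets "mean zero" imply "every term zero". One caveat to flag is that the softmax in \cref{eq:softmax} never produces an exactly one-hot vector for $\tau>0$. Equality is therefore attained only in the limit $\tau\to0^+$ of \cref{sec:prelim}(ii), and the proposition is a statement about the simplex, not about attainment during training.
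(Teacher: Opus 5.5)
Your proof is correct and is essentially the paper's argument. The paper writes $\sum_k p_k^2 \le (\max_k p_k)\sum_k p_k = \max_k p_k \le 1$, calls it Cauchy--Schwarz (it is really the same termwise comparison you use, with $\max_k p_k$ as an intermediate bound), reads off equality from $\max_k p_k = 1$, and your identity $1-\sum_k p_k^2=\sum_k p_k(1-p_k)$ only makes the ``iff'' and the passage through $\ReLU$ and the finite average more explicit than its one-line treatment.
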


\begin{proof}[Proof of \cref{prop:peak}]
By Cauchy--Schwarz applied to $p_k\cdot p_k$,
$\sum_k p_k^2 \le (\max_k p_k)\sum_k p_k = \max_k p_k \le 1$, with both equalities iff some
$p_{k^\star}=1$.
\end{proof}

\begin{proposition*}[Restatement of \cref{prop:div}]
Let $S=\{k:\bar p_k>0\}$ be the support of $\bar p$ and let $m=|S|$ be its size. Then
$\mathcal{L}_{\mathrm{div}} = K\sum_k \bar p_k^2 \ge K/m$, with equality iff $\bar p$ is
uniform on $S$. If every token's assignment is one-hot, $\bar p$ is an average of
$B{\cdot}T$ indicator vectors, so $m \le \min(B{\cdot}T,K)$ and
$\mathcal{L}_{\mathrm{div}} \ge K/\min(B{\cdot}T,K)$.
\end{proposition*}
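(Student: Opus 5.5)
The plan is to apply Cauchy--Schwarz on the support $S$ of $\bar p$. Since $\bar p$ is an average of points of $\Delta^{K-1}$, it lies in the simplex, so $\sum_{k\in S}\bar p_k=\sum_k \bar p_k=1$. Pairing the vector $(\bar p_k)_{k\in S}$ with the all-ones vector on $S$ gives
\[
1=\Bigl(\sum_{k\in S}\bar p_k\cdot 1\Bigr)^2\le m\sum_{k\in S}\bar p_k^2=m\sum_k\bar p_k^2 .
\]
The last equality holds because coordinates outside $S$ vanish. Multiplying by $K/m$ yields $\mathcal{L}_{\mathrm{div}}\ge K/m$.

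For the equality clause, I will use the equality case of Cauchy--Schwarz. Equality holds iff $(\bar p_k)_{k\in S}$ is proportional to the all-ones vector, i.e. $\bar p_k=c$ for all $k\in S$. Normalization then forces $c=1/m$, which is exactly uniformity on $S$. Conversely, a uniform $\bar p$ on $S$ gives $K\cdot m\cdot m^{-2}=K/m$.

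For the one-hot regime, each $p_{b,t}=\mathbf{e}_{k_{b,t}}$ is a standard basis vector. Then $\bar p=\frac{1}{BT}\sum_{b,t}\mathbf{e}_{k_{b,t}}$, and its support is the set $\{k_{b,t}\}$ of selected indices. That set has at most $B\cdot T$ distinct elements and is contained in $\{1,\dots,K\}$, so $m\le\min(B\cdot T,K)$. Since $K/m$ is decreasing in $m$, the first part gives $\mathcal{L}_{\mathrm{div}}\ge K/m\ge K/\min(B\cdot T,K)$.

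There is no real obstacle here. The only points needing care are restricting the Cauchy--Schwarz sum to $S$, so that the constant is $m$ rather than $K$, and noting that $m\ge1$ because $\bar p$ sums to $1$.
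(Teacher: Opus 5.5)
Your proof is correct and takes essentially the same route as the paper. Both apply Cauchy--Schwarz against the all-ones vector restricted to the support $S$, use its equality case to get uniformity on $S$, and observe that an average of $B\cdot T$ one-hot vectors has at most $\min(B\cdot T,K)$ nonzero coordinates. Your explicit remarks that $K/m$ is decreasing in $m$ and that $m\ge 1$ fill in small steps the paper leaves implicit.
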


\begin{proof}[Proof of \cref{prop:div}]
Apply Cauchy--Schwarz on the support:
$1=\bigl(\sum_{k\in S}\bar p_k\cdot 1\bigr)^2 \le m\sum_{k\in S}\bar p_k^2
 = m\sum_k \bar p_k^2$, so $\sum_k \bar p_k^2 \ge 1/m$ with equality iff $\bar p_k$ is
constant on $S$; multiplying by $K$ gives the bound. For the second statement, a one-hot
$p_{b,t,\cdot}$ places all its mass on a single index, so an average of $B{\cdot}T$ of them
is supported on at most $B{\cdot}T$ codes, and on at most $K$ codes in any case.
\end{proof}

\section{Computational Cost}
\label{app:cost}

\begin{table}[t]
\centering
\caption{\textbf{Quantizer compute cost.} Forward+backward of the quantizer alone at the
training configuration (batch 32, $T{=}256$, $C{=}64$). At our operating point
$K{=}16{,}384$ the all-code soft assignment is $\sim$10\% of a training step, and LGQ is
$3.8\times$ faster and $1.5\times$ smaller than IBQ at every $K$; IBQ exhausts memory at
$K{=}262{,}144$ where LGQ still trains. FSQ is essentially free and $K$-independent, as it
maintains no codebook. Costs grow linearly in $K$ for all codebook-based methods, so we
scope the efficiency claim to $K \le 65{,}536$.}
\label{tab:quantizer_cost}
\small
\setlength{\tabcolsep}{6pt}
\begin{tabular}{@{}llrrrr@{}}
\toprule
\kilth{Method} & \kilth{$K$} & \kilth{img/s} & \kilth{ms/iter} & \kilth{Peak mem (MB)} & \kilth{Share of step} \\
\midrule
FSQ   & 16{,}384 & 77{,}093 & 0.42 & 1.5 & $<$1\% \\
FSQ   & 65{,}536 & 71{,}933 & 0.45 & 2.3 & $<$1\% \\
\midrule
SimVQ & 16{,}384 & 13{,}123 & 2.44 & 856 & 2.0\% \\
SimVQ & 65{,}536 & 3{,}720 & 8.60 & 3{,}196 & 7.2\% \\
\midrule
IBQ   & 16{,}384 & 694 & 46.09 & 4{,}700 & 38.5\% \\
IBQ   & 65{,}536 & 176 & 182.19 & 18{,}548 & 152\% \\
IBQ   & 262{,}144 & \multicolumn{4}{c}{out of memory} \\
\midrule
\textbf{LGQ} & 4{,}096  & 8{,}925 & 3.59 & 854 & \best{3.0\%} \\
\textbf{LGQ} & 16{,}384 & 2{,}627 & 12.18 & 3{,}165 & \best{10.2\%} \\
\textbf{LGQ} & 65{,}536 & 679 & 47.10 & 12{,}405 & 39.4\% \\
\textbf{LGQ} & 262{,}144 & 168 & 190.33 & 49{,}369 & 159\% \\
\bottomrule
\end{tabular}
\end{table}

\Cref{tab:quantizer_cost} times the quantizer's forward and backward pass in isolation at
the training configuration, which is the measurement \cref{subsec:cost} summarizes. At the
operating point $K{=}16{,}384$ the quantizer accounts for $\sim$$10\%$ of a full optimizer
step; LGQ is $3.8\times$ faster and $1.5\times$ smaller than IBQ at matched $K$, and IBQ
exhausts memory at $K{=}262{,}144$ where LGQ still trains. FSQ is cheaper than either and
$K$-independent, since it has no codebook to score against.

End-to-end the differences compress further. Aggregating \texttt{sacct} elapsed time over
every training job and dividing by epochs completed, one epoch costs $15$--$17$ GPU-hours
for \emph{every} quantizer at $K{=}16{,}384$ (LGQ $16.4$, IBQ $17.3$, SimVQ $15.4$, RotVQ
$15.6$, LFQ $16.8$), and LGQ's own cost rises only from $15.9$ to $17.3$ GPU-hours per
epoch as $K$ grows $16\times$ from $4096$ to $65{,}536$, an $8.9\%$ increase for a
$16\times$ larger codebook. The two measurements are consistent because the quantizer is a
minority of a step at both ends of that range: growing its own share from $3.0\%$ to
$39.4\%$ of a step leaves data loading, the perceptual and adversarial losses, and
per-epoch validation dominating wall-clock, so a large relative change in the quantizer
lands as a single-digit end-to-end one. These aggregate figures are \emph{upper bounds}:
jobs are terminated at a wall-time limit mid-epoch, so each carries a partial epoch of
unattributed work, and they mix training with per-epoch validation.

\FloatBarrier

\section{Baseline Quantizers in Detail}
\label{app:baselines}
This section expands the per-method summary of \cref{sec:related}. \Cref{tab:theory}
contrasts the collapse fix each baseline adopts, the structural cost of that fix, and the
corresponding LGQ design decision; the paragraphs below give the detail behind each row.

\begin{table}[t]
\centering
\caption{\textbf{Theoretical contrast of the five baselines and LGQ.} Each row identifies
the baseline's failure-mode fix, the structural cost of that fix, and the corresponding LGQ
design decision that avoids the cost. The SimVQ row characterizes the published
single-linear reparameterization; our experiments use a stronger nonlinear MLP variant
(\cref{app:quant-config}).}
\label{tab:theory}
\small
\setlength{\tabcolsep}{5pt}
\begin{tabular}{@{}l >{\raggedright\arraybackslash}p{0.24\linewidth} >{\raggedright\arraybackslash}p{0.28\linewidth} >{\raggedright\arraybackslash}p{0.28\linewidth}@{}}
\toprule
\kilth{Method} & \kilth{Collapse fix} & \kilth{Structural cost} & \kilth{LGQ's response} \\
\midrule
VQ      & EMA + argmin assignment    & Only $k^*$ learns; dead codes accumulate              & Soft-assignment $\Rightarrow$ all codes learn \\
RotVQ   & Rotation+rescale STE       & Argmin still picks one code per step                  & Soft gradient avoids needing a rotation \\
FSQ     & Fixed scalar grid          & No data adaptivity, axis-aligned                      & Learnable codes in $\mathbb{R}^C$ \\
SimVQ   & Shared linear basis $\mathbf W$ & Codebook constrained to one learned subspace     & Per-code degrees of freedom retained \\
IBQ     & Softmax STE on all codes   & Unnormalized dot-product logits, fixed $\tau$, entropy penalty & $\ell_2^2$ logits, annealed $\tau$, soft-distribution regularizers \\
\bottomrule
\end{tabular}
\end{table}

\paragraph{VQ-VAE~\cite{vqvae}} The original encoder--quantizer--decoder template uses the
argmin assignment with a straight-through estimator~\cite{ste}: only the selected code
$k^\star$ receives a codebook gradient, so non-selected codes drift into \emph{dead codes},
which is the well-known codebook collapse~\cite{vqvae2,simvq}. EMA updates and codebook
resets mitigate but do not structurally fix. LGQ avoids this by giving every code a
gradient through the soft-assignment.

\paragraph{RotVQ / Rotation Trick~\cite{rotation_trick}} Fifty \etal keep the argmin
forward pass but replace the straight-through gradient with a rotation and rescaling that
aligns the encoder output with its assigned code, preserving their relative angle and
magnitude and recovering information that vanilla STE discards. The fix acts on the
gradient path rather than on how many codes are learned per step, since the assignment
remains hard. LGQ sidesteps this entirely: its soft-assignment gradient already gives every
code a signal proportional to its proximity, with no gradient surgery.

\paragraph{FSQ~\cite{fsq}} Finite Scalar Quantization replaces the learned codebook with a
Cartesian product of fixed scalar levels per channel (\eg $L=[8,8,8,8,4]$ at
$K{=}16{,}384$). Collapse is eliminated by construction, every grid point is a valid code,
so the codebook cannot drift because it cannot move at all, but the trade-off is a loss of
adaptivity: the axis-aligned grid cannot concentrate capacity in high-density regions or
adapt to off-axis cluster structure. LGQ keeps a flat learnable codebook while inheriting
FSQ's collapse-resistance via regularization, gaining adaptivity and full utilization at
once.

\paragraph{LFQ~\cite{lfq}} Lookup-Free Quantization (MAGVIT-v2) encodes each spatial
location as a per-channel sign pattern, giving an implicit codebook of $2^d$ entries kept
balanced by an entropy loss. It scales the implicit codebook well, but restricts codes to
the corners of the $\{-1,+1\}^d$ hypercube, so it cannot place cluster codes off the
sign-pattern lattice. On our backbone this shows up as a slow start: one of the $14$ binary
dimensions saturates to a constant sign from epoch~$8$, halving the reachable codebook to
$\sim$$50\%$ utilization, and the dimension only unsaturates at epoch~$25$, after which
utilization holds at $100\%$. This is why LFQ is reported at a later epoch than the other
rows of \cref{tab:k16k}. LGQ's codes sit anywhere in $\mathbb{R}^C$, removing the lattice
constraint while keeping the entropy-style ``every code should be used'' intuition via
$\mathcal{L}_{\text{div}}$.

\paragraph{SimVQ~\cite{simvq}} Zhu \etal reparameterize the codebook through a single
shared linear layer $\mathbf{C}\mathbf{W}$ ($\mathbf{C}$ a frozen random matrix,
$\mathbf{W}$ the only learnable component), so updating $\mathbf{W}$ moves all $K$ codes
jointly and VQ's disjoint per-code optimization becomes optimization of one linear
subspace. This reaches $100\%$ utilization (reported at $K{=}65{,}536$ on ImageNet), but
binds every code to a single learned basis, limiting the geometry the codebook can
represent. LGQ instead keeps per-code degrees of freedom, with each $\mathbf{e}_k$
independently learnable, while still preventing collapse through explicit usage
regularization ($\lambda_{\text{peak}}{=}\lambda_{\text{div}}{=}0.005$ throughout the $K$
sweep, held fixed rather than retuned per $K$).

\paragraph{VQGAN-LC~\cite{vqganlc}} Closest in spirit to our $K$-scaling experiments,
VQGAN-LC scales the codebook to $100{,}000$ entries at $99\%$ utilization by freezing a
codebook initialized from pretrained visual features and learning only a projector onto
it. Like SimVQ it buys utilization by giving up per-code adaptivity, and like SimVQ its
codes cannot migrate toward the encoder distribution. We do not run it: it requires a
pretrained feature extractor to initialize the codebook, which breaks the matched-backbone,
matched-initialization protocol under which every row of \cref{tab:k16k} is produced, so a
number from it would not be comparable to the others.

\paragraph{IBQ~\cite{ibq}} Index Backpropagation Quantization is closest to LGQ in spirit:
it applies STE to a softmax over $\mathbf{z}^\top\mathbf{C}_k$ logits, so every code
receives a gradient proportional to its probability, exactly as LGQ does, and scales to
$K{=}2^{18}$ with a two-sided (double) quantization loss, a deeper backbone, and a
MAGVIT-v2-style entropy penalty. It differs from LGQ in three ways: unnormalized
dot-product logits instead of squared $\ell_2$; a fixed softmax temperature instead of
annealed $\tau{:}\,1.0\!\to\!0.1$; and an entropy penalty on the assignment distribution
instead of peakedness$+$diversity on the soft-assignment. \Cref{sec:vs-ibq} isolates the
first of these three differences experimentally.

\FloatBarrier

\section{Training Setup}
\label{app:training}
We train on ImageNet ILSVRC-2012~\cite{imagenet} at $256{\times}256$ on H100 GPUs using
DDP across 4 GPUs, with per-GPU batch size 8 (global batch 32). The optimizer is
AdamW~\cite{adamw} with a learning rate $3\times10^{-4}$, no schedule, AMP enabled. Data
augmentation: random resized crop and horizontal flip with ImageNet normalization.

\section{Quantizer Configurations}
\label{app:quant-config}
\textbf{LGQ}: $\lambda_{\text{peak}}=\lambda_{\text{div}}=0.005$, $\tau$ annealed linearly
$1.0\!\to\!0.1$, flattened mode (one codebook of size $K$ over $C$-dimensional vectors,
not per-channel), squared-$\ell_2$ distance, fixed-seed random initialization of code
vectors.
\textbf{RotVQ}: argmin assignment with the rotation trick~\cite{rotation_trick} enabled,
commitment weight $\beta{=}1.0$, and EMA codebook updates with decay $\gamma{=}0.8$.
\textbf{FSQ}: level configuration $[8,8,8,8,4]$, giving $K=16{,}384$.
\textbf{SimVQ}: the shared-codebook reparameterization of~\cite{simvq} with the codebook
transform set to a Linear--ReLU--Linear MLP (hidden width $2C$) and commitment weight
$\beta=10$. This is a deliberate deviation from the published single-linear map
($\mathbf{C}\mathbf{W}$, $\beta{=}1.0$): the more expressive MLP transform improved SimVQ's
reconstruction on our backbone, so we report this stronger variant rather than understate
the baseline.
\textbf{IBQ}: as published~\cite{ibq}: unnormalized dot-product logits
$\mathbf{z}^\top\mathbf{e}_k$ into the softmax, a fixed softmax temperature, and the
standard entropy-based assignment loss.

\section{Compute Budget and Convergence}
\label{app:compute}
Each tokenizer training run uses $4\times$H100 80\,GB on an internal university GPU
cluster with a wall-time budget of two days per resume; long runs are checkpointed and
resumed. MaskGIT training runs use a single H100 (or $4\times$H100 in DDP for the longer
runs). Total compute across all tokenizer and MaskGIT runs reported here is on the order of
a few thousand H100 GPU-hours.

\paragraph{Convergence of the comparison} Each tokenizer is reported at its
best-validation-rFID checkpoint, by which point the per-epoch rFID curves have plateaued;
the differing checkpoint epochs reflect differing convergence rates under a shared
wall-clock budget, not unequal training effort, so a later-epoch checkpoint is not a
better-resourced one. Each MaskGIT transformer is likewise trained to its validation-NLL
plateau, and the two weakest baselines (RotVQ, SimVQ) reach fewer epochs but remain far
behind the leaders, so their stopping point does not affect the ranking.

\section{Selection-Rule Analysis: \texorpdfstring{$\ell_2^2$}{l2-squared} vs.\ Dot-Product (vs.\ IBQ)}
\label{app:vs-ibq}
This section gives the propositions, proofs, and probabilistic interpretation behind the
$\ell_2^2$-vs-dot-product comparison of \cref{sec:vs-ibq}. The closest baseline to LGQ at
the algorithmic level is IBQ~\cite{ibq}, which also applies STE to a softmax over all $K$
codes but scores codes by the unnormalized inner product
$\langle\mathbf{z},\mathbf{e}_k\rangle$ rather than by squared $\ell_2$ distance.

\begin{proposition}[Bayes-optimality of nearest-neighbour selection]
\label{prop:bayes}
For a fixed codebook $\{\mathbf{e}_k\}_{k=1}^K$ and an MSE reconstruction loss
$\|\hat{\mathbf{z}}-\mathbf{z}\|_2^2$, the deterministic hard quantizer that minimizes the
expected reconstruction error is
\[
q^\star(\mathbf{z}) \;=\; \arg\min_k \|\mathbf{z}-\mathbf{e}_k\|_2^2.
\]
\end{proposition}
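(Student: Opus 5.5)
The plan is a pointwise argument. A deterministic hard quantizer with a fixed codebook is any measurable map $q:\mathbb{R}^C\to\{1,\dots,K\}$ that outputs $\hat{\mathbf{z}}=\mathbf{e}_{q(\mathbf{z})}$. Its expected reconstruction error under any distribution $P$ of encoder outputs is
\[
R(q)=\mathbb{E}_{\mathbf{z}\sim P}\bigl[\|\mathbf{e}_{q(\mathbf{z})}-\mathbf{z}\|_2^2\bigr].
\]
The integrand depends on $q$ only through its value at the single point $\mathbf{z}$. No constraint couples the choices made at different points, because the codebook is held fixed, so $R$ can be minimized by minimizing the integrand separately for each $\mathbf{z}$. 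This is the standard conditional-risk argument from Bayes decision theory.

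First, for any admissible $q$ and every $\mathbf{z}$, the pointwise inequality
\[
\|\mathbf{e}_{q(\mathbf{z})}-\mathbf{z}\|_2^2\;\ge\;\min_k\|\mathbf{e}_k-\mathbf{z}\|_2^2=\|\mathbf{e}_{q^\star(\mathbf{z})}-\mathbf{z}\|_2^2
\]
holds. Taking expectations and using monotonicity of the integral gives $R(q)\ge R(q^\star)$, which shows that $q^\star$ attains the infimum.

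Second, I will address well-posedness. Ties in the $\arg\min$ are broken by any fixed rule, for instance the smallest index. With that rule $q^\star$ is measurable, since each set $\{\mathbf{z}:q^\star(\mathbf{z})=k\}$ is a finite intersection of half-spaces, namely the Voronoi cells cut out by the linear inequalities obtained from the expansion in \cref{eq:mse-expand}. The risk $R(q^\star)$ is finite whenever $P$ has a finite second moment.

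Third, I will remark on uniqueness. Any other minimizer must agree with $q^\star$ $P$-almost everywhere, up to the choice among tied indices. The reason is that if $q$ picks a strictly non-nearest code on a set of positive measure, the integrand gap is strictly positive there, so $R(q)>R(q^\star)$.

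I do not expect a substantive obstacle. The only care needed is in the second step: stating precisely what "deterministic hard quantizer" ranges over (measurable index maps), and handling ties and measurability so that the expectation is defined. Once that is settled, the optimality claim is immediate from the pointwise inequality.
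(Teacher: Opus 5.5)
Your proof is correct and is essentially the paper's argument: the paper writes the risk as $\mathbb{E}\,\min_k\|\mathbf{z}-\mathbf{e}_k\|^2$ plus a nonnegative expected excess distortion, which is your pointwise inequality integrated, and it cites Lloyd's optimality condition where you cite the Bayes conditional-risk argument. Your treatment of ties, measurability, and almost-everywhere uniqueness sharpens the paper's looser claim that the excess term vanishes ``exactly when $q=q^\star$''.
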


\begin{proof}
For any quantizer $q$,
$\mathbb{E}\|\mathbf{z}-\mathbf{e}_{q(\mathbf{z})}\|^2
 = \mathbb{E}\,\min_k\|\mathbf{z}-\mathbf{e}_k\|^2
 + \mathbb{E}\!\left(\|\mathbf{z}-\mathbf{e}_{q(\mathbf{z})}\|^2
   - \min_k\|\mathbf{z}-\mathbf{e}_k\|^2\right)$;
the second term is non-negative and vanishes exactly when $q=q^\star$ (Lloyd's optimality
condition~\cite{lloyd1982}).
\end{proof}

LGQ's forward pass is $k^\star=\arg\max_k p_{b,t,k}=q^\star(\mathbf{z}_{b,t})$ because
$p_k$ is monotone in $-\|\mathbf{z}-\mathbf{e}_k\|^2$, so LGQ's selection rule is
Bayes-optimal \emph{for latent-space MSE}, whereas IBQ's is not. \Cref{prop:bayes} concerns
the latent reconstruction term $\|\hat{\mathbf{z}}-\mathbf{z}\|_2^2$ in isolation, for a
\emph{fixed} codebook. LGQ is trained under a composite objective that additionally
includes pixel, perceptual, adversarial, peakedness, and diversity terms, over a codebook
that is itself being learned. The proposition therefore motivates the Euclidean assignment
rule; it does not establish that LGQ is optimal for the objective actually optimized, and
it does not by itself account for the observed rFID gap with IBQ. We treat that gap as
empirical evidence, isolated by the matched logit ablation rather than by this proposition.

\begin{corollary}[Norm-bias gap of dot-product selection]
\label{cor:normbias}
Let $q_{\mathrm{IBQ}}(\mathbf{z}) = \arg\max_k \langle\mathbf{z},\mathbf{e}_k\rangle$.
Because
$\arg\min_k\|\mathbf{z}-\mathbf{e}_k\|^2
   = \arg\min_k\!\bigl(-2\langle\mathbf{z},\mathbf{e}_k\rangle +\|\mathbf{e}_k\|^2\bigr)$,
$q_{\mathrm{IBQ}}=q^\star$ iff $\|\mathbf{e}_k\|^2$ is constant in $k$. When codebook
norms vary, $q_{\mathrm{IBQ}}$ systematically prefers larger-norm codes, with per-sample
excess distortion
\begin{align*}
\Delta_{\mathrm{IBQ}}(\mathbf{z})
 &\;=\; \|\mathbf{z}-\mathbf{e}_{q_{\mathrm{IBQ}}(\mathbf{z})}\|^2
       - \|\mathbf{z}-\mathbf{e}_{q^\star(\mathbf{z})}\|^2 \\
 &\;\le\;
 \max_{j,k}\bigl(\|\mathbf{e}_j\|^2-\|\mathbf{e}_k\|^2\bigr).
\end{align*}
\end{corollary}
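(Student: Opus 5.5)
The plan is to work with the expansion \cref{eq:mse-expand} and compare the two selection rules code by code. For a fixed token $\mathbf{z}$, the term $\|\mathbf{z}\|^2$ is common to all codes, so $q^\star(\mathbf{z})=\arg\min_k(-2\langle\mathbf{z},\mathbf{e}_k\rangle+\|\mathbf{e}_k\|^2)$. This is the identity displayed in the statement, and everything else follows from it.

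I read ``$q_{\mathrm{IBQ}}=q^\star$'' as equality of the two maps on all of $\mathbb{R}^C$. I assume the codes are distinct and that ties are broken by the same rule in both maps.

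\textbf{The ``if'' direction.} If $\|\mathbf{e}_k\|^2\equiv c$, then $\|\mathbf{z}-\mathbf{e}_k\|^2=\|\mathbf{z}\|^2+c-2\langle\mathbf{z},\mathbf{e}_k\rangle$. This is a strictly decreasing affine function of the inner product, so the argmin and the argmax coincide.

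\textbf{The ``only if'' direction.} I will construct a token on which the rules disagree whenever the norms are not all equal.
\begin{itemize}
\item Let $a$ be a code of maximal norm and set $\mathbf{z}=s\,\mathbf{e}_a$ with $s>0$ small.
\item By Cauchy--Schwarz, $\langle\mathbf{e}_a,\mathbf{e}_k\rangle\le\|\mathbf{e}_a\|\|\mathbf{e}_k\|\le\|\mathbf{e}_a\|^2$, with equality only if $\mathbf{e}_k=\mathbf{e}_a$. Hence $q_{\mathrm{IBQ}}(s\mathbf{e}_a)=a$ for every $s>0$.
\item Meanwhile the $q^\star$ score is $\|\mathbf{e}_k\|^2-2s\langle\mathbf{e}_a,\mathbf{e}_k\rangle$. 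As $s\to0^+$ this tends to $\|\mathbf{e}_k\|^2$, so for small $s$ the minimizer is a code of minimal norm.
\item Because the norms are not constant, a minimal-norm code is not $a$, so the two rules disagree at $\mathbf{z}=s\,\mathbf{e}_a$.
\end{itemize}
The main obstacle is making this step precise: choosing the probe token so that the IBQ choice is pinned independently of $s$. The maximal-norm direction does this, and the small-$s$ limit then lets the norm term dominate the $q^\star$ score.

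\textbf{The excess-distortion bound and the large-norm bias.} Let $i=q_{\mathrm{IBQ}}(\mathbf{z})$ and $j=q^\star(\mathbf{z})$. Expanding both squared distances, the $\|\mathbf{z}\|^2$ terms cancel:
\[
\Delta_{\mathrm{IBQ}}(\mathbf{z})=\bigl(\|\mathbf{e}_i\|^2-\|\mathbf{e}_j\|^2\bigr)-2\bigl(\langle\mathbf{z},\mathbf{e}_i\rangle-\langle\mathbf{z},\mathbf{e}_j\rangle\bigr).
\]
Since $i$ maximizes the inner product, the second bracket is nonnegative. Dropping it gives
\[
\Delta_{\mathrm{IBQ}}(\mathbf{z})\le\|\mathbf{e}_i\|^2-\|\mathbf{e}_j\|^2\le\max_{j,k}\bigl(\|\mathbf{e}_j\|^2-\|\mathbf{e}_k\|^2\bigr).
\]

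For the bias claim, I use that $j$ minimizes the distance, so $\Delta_{\mathrm{IBQ}}(\mathbf{z})\ge0$. Substituting this into the display gives
\[
\|\mathbf{e}_i\|^2-\|\mathbf{e}_j\|^2\ge2\bigl(\langle\mathbf{z},\mathbf{e}_i\rangle-\langle\mathbf{z},\mathbf{e}_j\rangle\bigr)\ge0.
\]
So whenever the two rules disagree, the dot-product rule selects a code whose norm is at least that of the nearest code. This is the precise meaning of ``systematically prefers larger-norm codes.''
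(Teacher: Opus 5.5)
Your proof is correct and follows the same route as the paper's. You expand $\|\mathbf{z}-\mathbf{e}_k\|^2$, cancel the common $\|\mathbf{z}\|^2$, and get the bound by discarding the nonnegative inner-product gap $2(\langle\mathbf{z},\mathbf{e}_i\rangle-\langle\mathbf{z},\mathbf{e}_j\rangle)$; you also make precise two points the paper's one-line sketch only asserts, namely the ``only if'' direction (via your probe $\mathbf{z}=s\,\mathbf{e}_a$ along a maximal-norm code, which is nonzero because the norms are not constant) and the large-norm bias (via $\|\mathbf{e}_{q_{\mathrm{IBQ}}(\mathbf{z})}\|\ge\|\mathbf{e}_{q^\star(\mathbf{z})}\|$).
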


\begin{proof}
Expanding $\|\mathbf{z}-\mathbf{e}_k\|^2 = \|\mathbf{z}\|^2
 - 2\langle\mathbf{z},\mathbf{e}_k\rangle + \|\mathbf{e}_k\|^2$,
$q_{\mathrm{IBQ}}$ matches $q^\star$ when the $\|\mathbf{e}_k\|^2$ term is constant in $k$
and is otherwise dominated by the largest-norm code. The substitution error is bounded by
the maximum norm gap.
\end{proof}

Nothing in IBQ's loss explicitly constrains $\|\mathbf{e}_k\|$ to be constant across $k$,
so the gap is a structural property of the rule rather than an implementation detail, and
it grows with codebook heterogeneity. The bound above is worst-case over $\mathbf{z}$ and
independent of it, so it upper-bounds rather than predicts the typical excess distortion.
The realized effect is therefore measured rather than derived, by the matched logit
ablation of \cref{sec:vs-ibq}, which holds backbone, seed, schedule and budget fixed and
varies only the quantity the softmax scores. That experiment, rather than the IBQ--LGQ rFID
gap of \cref{tab:k16k}, is what isolates the selection rule: the published IBQ
configuration also differs in its two-sided (double) quantization loss, backbone depth, and
entropy penalty.

\paragraph{Measured norm bias}
The matched logit ablation lets us test \cref{cor:normbias} directly, since both arms share
a seed and an initialization and differ only in the scored quantity. We identify dead codes
by the batch-averaged soft assignment $\bar p_k$ stored by the quantizer; in the
dot-product arm this statistic is bimodal, around $10^{-14}$ against above $10^{-6}$, so a
$10^{-8}$ cut is insensitive to its exact value. The dead set is fixed from the fourth
epoch onward and never changes membership (\cref{tab:normbias}).

\begin{table}[t]
\centering
\caption{\textbf{Codebook norms in the dot-product arm.} The same $3{,}300$ codes are dead
at every checkpoint. Dead and live norms are disjoint, and the separation widens
monotonically: every one of the $3{,}300$ shrank between consecutive checkpoints, and the
dead ranges at successive epochs do not overlap. The last column is the correlation between
a code's norm and the log of its usage.}
\label{tab:normbias}
\small
\begin{tabular}{@{}lcccc@{}}
\toprule
\kilth{Epoch} & \kilth{Dead} & \kilth{Dead $\|\mathbf{e}_k\|$} & \kilth{Live $\|\mathbf{e}_k\|$} & \kilth{$\mathrm{corr}$} \\
\midrule
$18$ & $3{,}300$ & $[0.823,\,1.022]$ & $[5.273,\,5.909]$ & $+0.996$ \\
$20$ & $3{,}300$ & $[0.648,\,0.805]$ & $[5.232,\,5.859]$ & $+0.997$ \\
$22$ & $3{,}300$ & $[0.511,\,0.634]$ & $[5.198,\,5.848]$ & $+0.997$ \\
\bottomrule
\end{tabular}
\end{table}

This is the mechanism of \cref{cor:normbias} operating during training rather than at a
fixed codebook: a code that falls behind on magnitude loses assignments, and without
assignments it receives no gradient with which to grow back, so the norm gap compounds. The
$\ell_2^2$ arm is the control. It has no dead codes at all, and the correlation between
norm and usage is $+0.08$, despite carrying a $3.6\times$ spread in norm
($\|\mathbf{e}_k\|\in[2.80,10.01]$ at epoch~$23$). The squared distance therefore does not
require a norm-homogeneous codebook to use every code, whereas the inner product does. We
report this as a property of these two runs; it is a measurement of the predicted effect,
not an independent proof of the corollary.

\paragraph{Scaling across $K$: does the baseline degrade where LGQ does not?}
A natural question about the codebook-size scaling of \cref{sec:results-sweep} is whether
it is specific to LGQ or would appear for any quantizer. We therefore trained IBQ at
$K{=}4{,}096$, $16{,}384$ and $65{,}536$ under the identical protocol (same encoder,
decoder, discriminator, optimizer, effective batch, seed and data order) and compare at a
\emph{matched} epoch at each $K$, taking every value from the training-time validation log
of the run itself. \Cref{tab:ibq-kscale} reports epoch~$24$ at $K{=}4{,}096$, epoch~$45$
at $K{=}16{,}384$ and epoch~$23$ at $K{=}65{,}536$, in each case the deepest epoch at which
both runs have a logged rFID.

\begin{table}[t]
\centering
\caption{\textbf{LGQ against IBQ across codebook size, at matched budget.} The two methods
are compared at a matched epoch at each $K$: epoch~$24$ at $K{=}4{,}096$, epoch~$45$ at
$K{=}16{,}384$, and epoch~$23$ at $K{=}65{,}536$. Every value is read from the run's own
training-time validation log, so the two methods are measured identically. Neither method
has reached the budget of the main comparison at these epochs and both continue to
improve, so these are budget-matched rather than final numbers, and they are not
comparable to the main-paper tables. Each row is a single training run. ``Active'' is the
number of codes assigned at least once. \best{Bold} marks the better of the two at each
$K$.}
\label{tab:ibq-kscale}
\small
\setlength{\tabcolsep}{4pt}
\begin{tabular}{@{}llccccccr@{}}
\toprule
\kilth{$K$} & \kilth{Method} & \kilth{PSNR $\uparrow$} & \kilth{SSIM $\uparrow$} & \kilth{LPIPS $\downarrow$} & \kilth{Rec.\ loss $\downarrow$} & \kilth{rFID $\downarrow$} & \kilth{Util.\ \%} & \kilth{Active} \\
\midrule
\multirow{2}{*}{$4{,}096$}
 & IBQ & $21.190$ & $0.5999$ & $0.2852$ & $0.2437$ & $20.68$ & $99.95$ & $4{,}094$ \\
 & LGQ & $\mathbf{21.409}$ & $\mathbf{0.6065}$ & $\mathbf{0.2705}$ & $\mathbf{0.2343}$ & $\mathbf{16.14}$ & $\mathbf{100.00}$ & $\mathbf{4{,}096}$ \\
\midrule
\multirow{2}{*}{$16{,}384$}
 & IBQ & $21.492$ & $0.6162$ & $0.2614$ & $0.2351$ & $16.63$ & $99.56$ & $16{,}312$ \\
 & LGQ & $\mathbf{21.738}$ & $\mathbf{0.6260}$ & $\mathbf{0.2436}$ & $\mathbf{0.2230}$ & $\mathbf{12.89}$ & $\mathbf{100.00}$ & $\mathbf{16{,}384}$ \\
\midrule
\multirow{2}{*}{$65{,}536$}
 & IBQ & $21.637$ & $0.6158$ & $0.2593$ & $0.2279$ & $16.10$ & $55.07$ & $36{,}088$ \\
 & LGQ & $\mathbf{22.093}$ & $\mathbf{0.6460}$ & $\mathbf{0.2304}$ & $\mathbf{0.2145}$ & $\mathbf{10.82}$ & $\mathbf{96.84}$ & $\mathbf{63{,}465}$ \\
\bottomrule
\end{tabular}
\end{table}

LGQ is ahead on every metric at all three codebook sizes, so the scaling behaviour is not
an artifact of how we selected checkpoints for \cref{tab:lgq_sweep}. The utilization
column is the more informative one, because the two methods move in opposite directions as
training proceeds. At $K{=}4{,}096$ and $K{=}16{,}384$ IBQ holds $99.95\%$ and $99.56\%$,
so its anti-collapse machinery works at small and medium $K$. At $K{=}65{,}536$ it peaks at
$98.18\%$ in epoch~$2$ and then falls, to $67.33\%$ by epoch~$12$ and $55.07\%$ ($36{,}088$
of $65{,}536$ codes) by epoch~$23$. LGQ at the same $K$ moves the other way, rising from
$82.64\%$ at epoch~$12$ through $89.96\%$ at epoch~$18$, $95.19\%$ at epoch~$22$ and
$96.84\%$ at epoch~$23$ to $100\%$ at epoch~$29$, after which it stays there. IBQ therefore
begins with the higher utilization and ends far below it: the two curves cross. We read
this as the entropy penalty being tuned at a codebook size where its temperature is
appropriate, and losing codes once $K$ grows past that point, whereas LGQ's diversity term
is stated relative to $K$ and needs no retuning. We report this from one run per cell and
do not claim it generalizes to other baselines, which we have not swept across $K$.

\paragraph{Probabilistic interpretation}
LGQ's softmax over $-\|\mathbf{z}-\mathbf{e}_k\|^2/\tau$ is the posterior of a Gaussian
mixture with isotropic covariance $\sigma^2=\tau/2$; the hard-assignment rule is therefore
MAP inference under a GMM. IBQ's softmax over $\langle\mathbf{z},\mathbf{e}_k\rangle$ is the
posterior of a von Mises--Fisher mixture on the hypersphere, which is the correct model
\emph{only} when features are constrained to unit norm. Since both our backbone and IBQ's
produce unconstrained $\mathbb{R}^C$ features and the decoder consumes Euclidean inputs,
the vMF prior is misspecified for both the data and the loss, so LGQ's distance-based
scoring matches the geometry that the rest of the pipeline actually uses. The dot-product
rule does have two genuine advantages, so the trade-off is worth stating explicitly: its
encoder gradient
$\partial\langle\mathbf{z},\mathbf{e}_k\rangle/\partial\mathbf{z}=\mathbf{e}_k$ is bounded
by $\|\mathbf{e}_k\|$ (whereas
$\partial\|\mathbf{z}-\mathbf{e}_k\|^2/\partial\mathbf{z}=2(\mathbf{z}-\mathbf{e}_k)$ grows
with the encoder--codebook gap), and it costs a single BLAS matmul without the
squared-difference expansion. LGQ accepts both costs, spending an annealed $\tau$ to keep
the backward signal bounded (the annealed-softmax Jacobian bound of \cref{sec:prelim}) and
paying for a pairwise distance, in exchange for a selection rule whose forward pass is the
Bayes-optimal nearest-neighbour rule for the reconstruction loss (\cref{prop:bayes}). The
controlled isolation of the selection rule alone is the matched logit ablation of
\cref{sec:vs-ibq}.

\FloatBarrier

\section{Per-Epoch Reconstruction Curves}
\label{app:curves}
Every tokenizer in this paper is reported at its best-validation-rFID checkpoint under a
shared wall-clock budget, so the natural question is whether the ordering those checkpoints
induce is an artifact of where each run happened to be stopped.
\Cref{fig:order_stability} plots validation rFID against training epoch for five of the six
quantizers of the main comparison, one run each, over a common window ending at
epoch~$36$, and answers it directly: the curves separate into three bands that never meet,
and the single pair that does exchange places is a pair whose runs are not separable at
this budget.

\begin{figure}[t]
\centering
\includegraphics[width=0.75\linewidth]{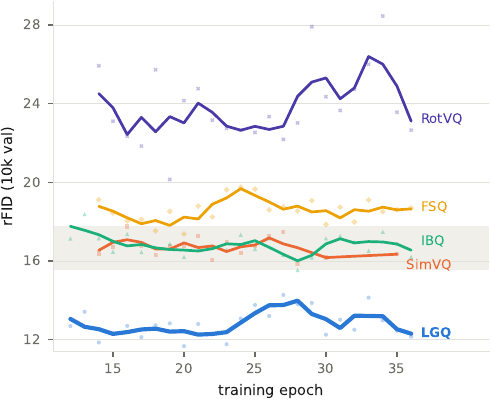}
\caption{\textbf{Reconstruction rFID against training epoch} for five of the six quantizers
of the main comparison over the epochs. Lines are a three-epoch centred rolling mean and
the faint markers are the raw per-epoch values. Of the $10$ method pairs measured at four
or more common epochs, $9$ never invert. LGQ leads at all $13$ epochs at which it was
measured in this window, with a minimum margin of $1.74$ rFID (mean $3.36$) against a
within-run epoch-to-epoch scatter of $0.60$. rFID here is the $10{,}000$-image validation
metric logged during training; the $50{,}000$-image ordering was checked separately at two
checkpoints and was identical at both.}
\label{fig:order_stability}
\end{figure}

\section{Codebook Geometry}
\label{app:geometry}
\Cref{fig:umap} gives the UMAP overlay of the learned codes on the continuous encoder
distribution referenced in \cref{sec:results-geometry}; the tokenizer architecture is
diagrammed in \cref{fig:arch}.

\begin{figure}[t]
    \centering
    \includegraphics[width=0.85\linewidth]{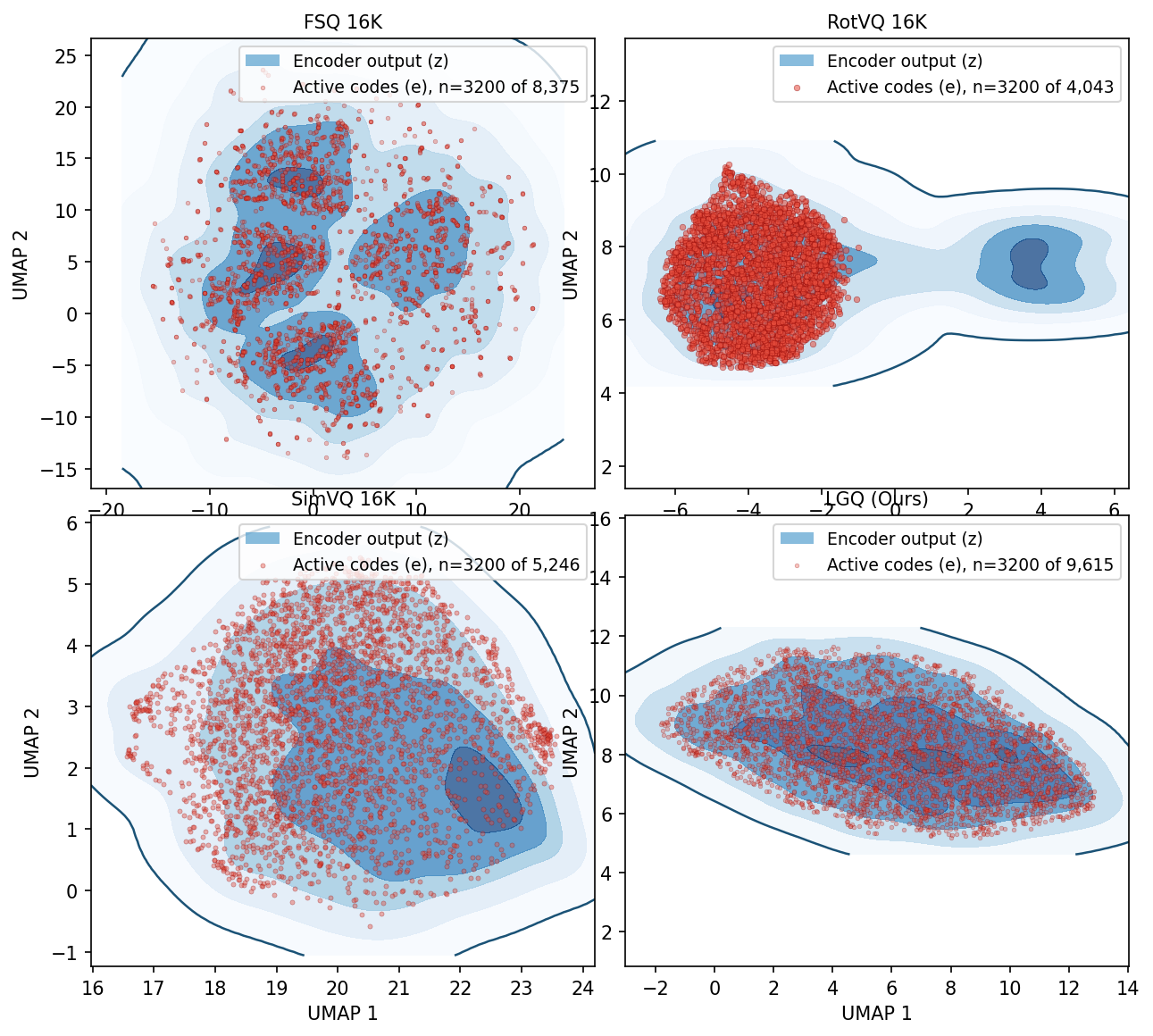}
    \caption{\textbf{UMAP of encoder latents (gray) with active codebook entries (red) at
    $K=16{,}384$.} LGQ's codes spread across the encoder distribution, consistent with its
    $100\%$ utilization; RotVQ's cluster in a small subregion, the visual signature of
    under-utilization. The overlay geometry tracks the rFID ranking of \cref{tab:k16k}.}
    \label{fig:umap}
\end{figure}

\FloatBarrier

\section{Full Qualitative Comparisons}
\label{app:qualitative}
\Cref{fig:recon-full,fig:gen-full} give the complete qualitative grids summarized by the
single-example strip in \cref{fig:recon}. \Cref{fig:recon-full} shows reconstructions of
six ImageNet validation images for every tokenizer; \cref{fig:gen-full} shows
class-conditional MaskGIT samples across eight ImageNet classes for all five tokenizers.
Because the cells of \cref{fig:gen-full} are chosen by classifier confidence, we also give
\cref{fig:gen-random}, an unselected draw in which every generated sample is shown; the two
together separate what the tokenizers can produce from what they typically produce.

\begin{figure}[t]
\centering
\includegraphics[width=0.9\linewidth]{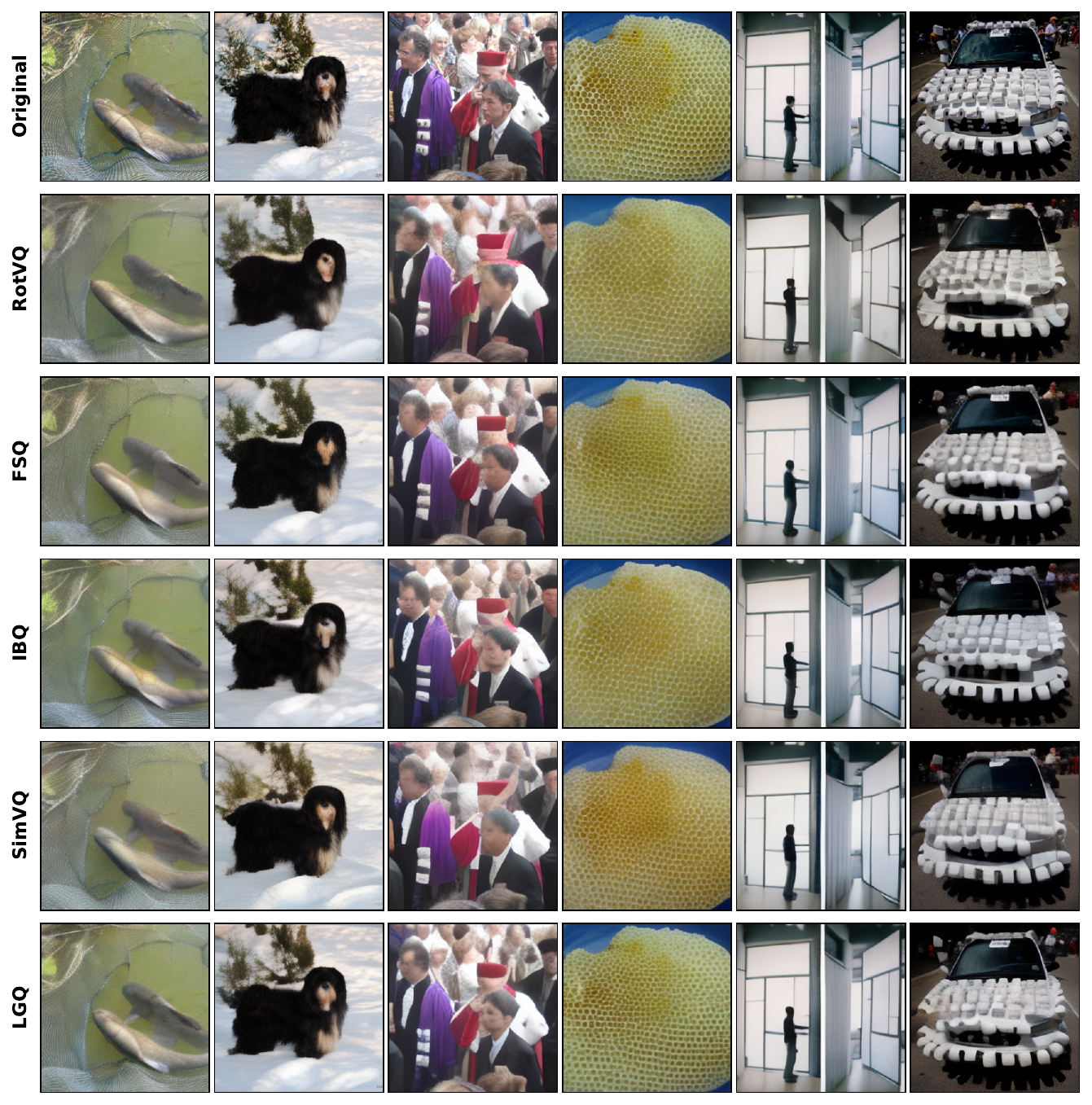}
\caption{\textbf{Reconstructions at $K{=}16{,}384$ (full grid).} Top row: original ImageNet
validation images; each subsequent row is the reconstruction from one frozen tokenizer
(best-rFID checkpoint). LGQ's reconstructions retain the finest texture and color
fidelity, consistent with its best rFID/LPIPS in \cref{tab:k16k}; RotVQ shows the most
blur, matching its higher rFID. \Cref{fig:recon} uses the second column (dog) of this
grid.}
\label{fig:recon-full}
\end{figure}

\begin{figure}[t]
\centering
\includegraphics[width=\linewidth]{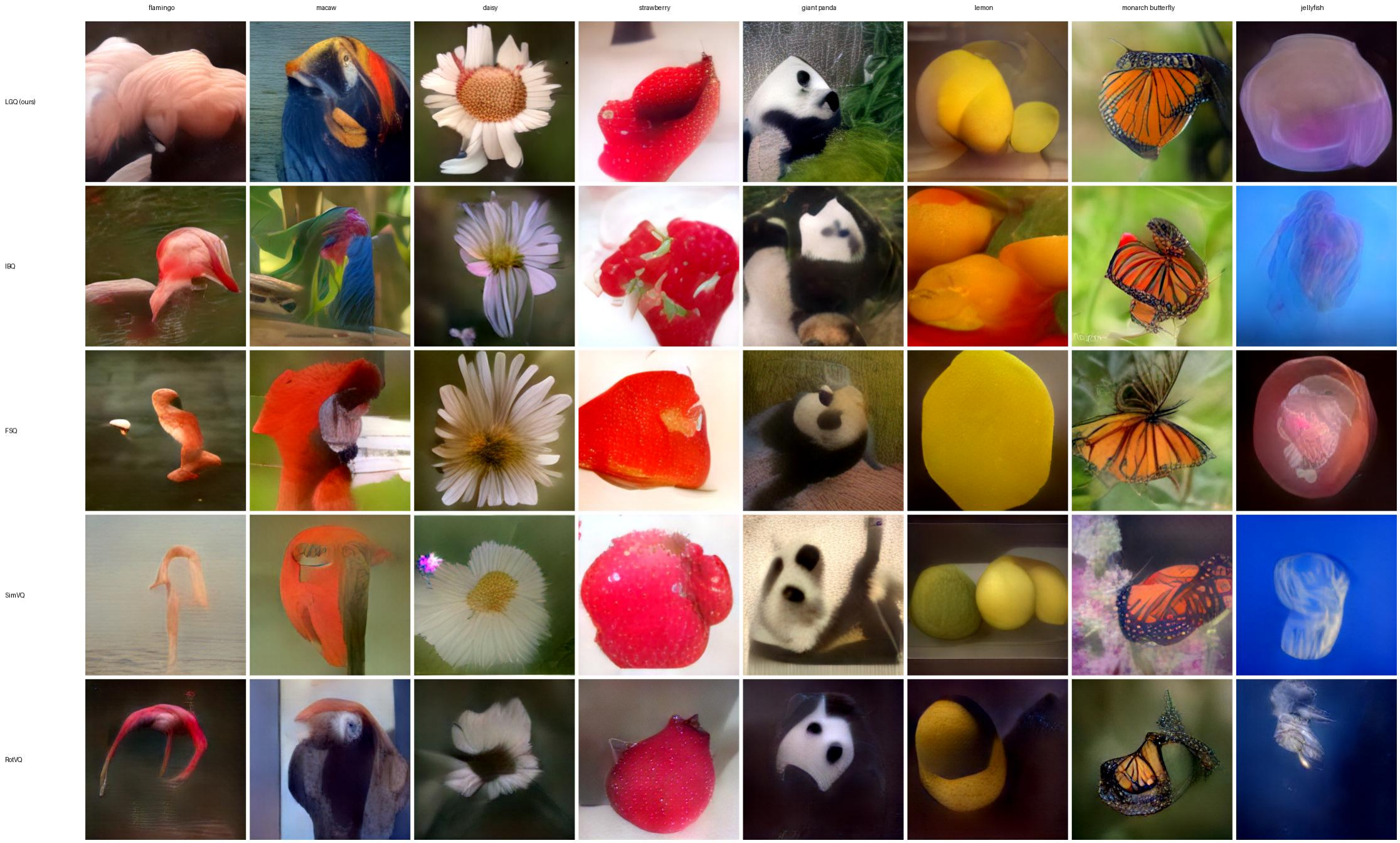}
\caption{\textbf{Class-conditional MaskGIT samples at $K{=}16{,}384$ (full grid).} Rows are
tokenizers ordered by gFID (top to bottom: LGQ, IBQ, FSQ, SimVQ, RotVQ), columns are eight
ImageNet classes; each cell is the highest-confidence generation for that class, selected
by a pretrained ImageNet classifier from the same $50$K-sample set used for the gFID in
\cref{tab:maskgit_gen}. Quality degrades top to bottom in step with gFID: LGQ's samples are
the sharpest and most class-coherent (best gFID $57.69$), while RotVQ's are the blurriest
(gFID $92.99$). Because the cells are classifier-selected, this grid is a best-case display
for every method; \cref{fig:gen-random} shows an unselected draw.}
\label{fig:gen-full}
\end{figure}

\begin{figure}[t]
\centering
\includegraphics[width=\linewidth]{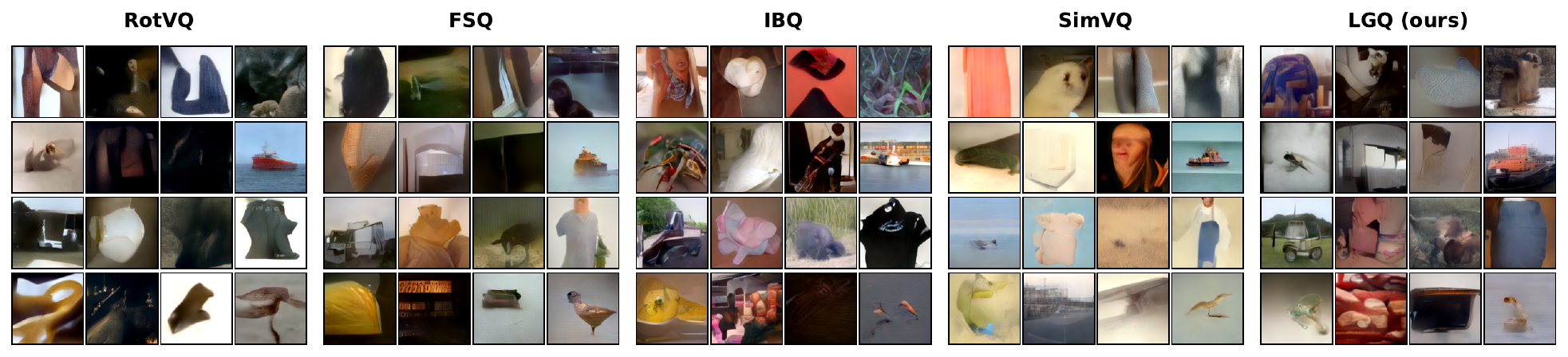}
\caption{\textbf{Unselected class-conditional MaskGIT samples at $K{=}16{,}384$.} Panels
are the same five tokenizers as \cref{fig:gen-full} (left to right: RotVQ, FSQ, IBQ, SimVQ,
LGQ), sixteen samples each. Class labels are drawn uniformly at random under a fixed seed
and reused across panels, so each grid position holds the same class in every panel.
\emph{Every generated sample is shown}: no filtering and no confidence ranking, in contrast
to \cref{fig:gen-full}. At this backbone scale ($27.99$M encoder/decoder parameters,
$2.56\times$ smaller than the standard VQ-GAN $f16$) absolute sample quality is low for
every method and only RotVQ is clearly distinguishable by eye, consistent with its gFID of
$92.99$; LGQ, IBQ, SimVQ and FSQ are not visually separable at this draw. We therefore rest
the generation comparison on the gFID over $50$K samples of \cref{tab:maskgit_gen} rather
than on qualitative panels.}
\label{fig:gen-random}
\end{figure}

\FloatBarrier

\end{document}